\documentclass[letterpaper]{article} 
\usepackage{aaai2026}  
\usepackage{times}  
\usepackage{helvet}  
\usepackage{courier}  
\usepackage[hyphens]{url}  
\usepackage{graphicx} 
\usepackage{natbib}  
\usepackage{caption} 
\usepackage{algorithm}
\usepackage{algorithmic}
\usepackage{amsmath}
\usepackage{amssymb}
\usepackage{amsthm}
\usepackage{graphicx}
\usepackage[capitalize]{cleveref}
\usepackage{kotex}
\usepackage{arydshln}
\usepackage{colortbl}

\newtheorem{theorem}{Theorem}

\newtheorem{definition}[theorem]{Definition}
\newtheorem{lemma}[theorem]{Lemma}
\newtheorem{proposition}[theorem]{Proposition}
\newtheorem{corollary}[theorem]{Corollary}
\newtheorem{remark}[theorem]{Remark}

\DeclareMathOperator*{\argmax}{argmax}
\usepackage{cancel}
\usepackage{multirow}
\usepackage{booktabs}
\usepackage{url}
\usepackage{microtype}
\usepackage{xcolor}
\usepackage{subcaption}
\begin{document}
%
\title{Fair Model-Based Clustering}
\author{
    Jinwon Park\textsuperscript{\rm 1}
    ,
    Kunwoong Kim\textsuperscript{\rm 2}
    ,
    Jihu Lee\textsuperscript{\rm 2},
    Yongdai Kim\textsuperscript{\rm 2}
}
\affiliations{
    \textsuperscript{\rm 1}Graduate School of Data Science, Seoul National University\\
    \textsuperscript{\rm 2}Department of Statistics, Seoul National University\\
    $\{$jwpark127, kwkim.online, superstring1153, ydkim0903$\}$@gmail.com
}

\maketitle

\begin{abstract}
    The goal of fair clustering is to find clusters such that the proportion of sensitive attributes (e.g., gender, race, etc.) in each cluster is similar to that of the entire dataset.
    Various fair clustering algorithms have been proposed that modify standard K-means clustering to satisfy a given fairness constraint.
    A critical limitation of several existing fair clustering algorithms is that the number of parameters to be learned is proportional to the sample size because the cluster assignment of each datum should be optimized simultaneously with the cluster center, and thus scaling up the algorithms is difficult.
    In this paper, we propose a new fair clustering algorithm based on a finite mixture model, called Fair Model-based Clustering (FMC).
    A main advantage of FMC is that the number of learnable parameters is independent of the sample size and thus can be scaled up easily.
    In particular, mini-batch learning is possible to obtain clusters that are approximately fair. 
    Moreover, FMC can be applied to non-metric data (e.g., categorical data) as long as the likelihood is well-defined. 
    Theoretical and empirical justifications for the superiority of the proposed algorithm are provided.
\end{abstract}


\section{Introduction}

Artificial intelligence (AI) systems have been increasingly deployed as decision support tools in socially sensitive domains such as credit scoring, criminal risk assessment, and college admissions. 
AI models trained on real-world data may inherently encode biases present in the training data. As a result, these models can exhibit discriminatory behavior or amplify historical biases \citep{feldman2015certifying, barocas2016big, chouldechova2017fair, kleinberg2018algorithmic, mehrabi2021survey, zhou2021radfusion}.
Numerous studies have shown that, in the presence of such unfairness, these systems tend to favor majority groups, such as white men, while disadvantaging minority groups, such as black women.
This can lead to unfair treatment of socially sensitive groups \citep{dua2017uci, angwin2022machine}.
These findings underscore the need for fairness-aware learning algorithms to mitigate bias in automated decision-making systems.

Given these circumstances, it is crucial to prioritize fairness in automated decision-making systems to ensure that their development is aligned with the principles of social responsibility.
Consequently, a variety of algorithmic approaches have been introduced to reduce bias by promoting equitable treatment of individuals across demographic groups.
For instance, one common approach enforces fairness by requiring that the rates of favorable outcomes be approximately the same across groups defined by sensitive attributes such as race or gender \citep{calders2009building, pmlr-v130-gitiaux21a, zafar2017fairness, zeng2021fair, agarwal2018reductions, donini2018empirical, pmlr-v28-zemel13, 8622525, Quadrianto_2019_CVPR}.

In parallel with advances in supervised learning, the study of algorithmic fairness in unsupervised learning, especially clustering, has garnered increasing attention.
Clustering methods have long been used for various machine learning applications, including time series analysis \citep{LEE2024112434, 10.1145/2949741.2949758}, audio and language modeling \citep{brown-etal-1992-class, diez19_interspeech, butnaru2017image, zhang2023clusterllm}, recommendation systems \cite{10.4108/eai.17-7-2021.2312409, widiyaningtyas2021recommendation}, and image clustering \citep{le2013building, guo2020learning, mittal2022comprehensive}.

Fair clustering \cite{chierichetti2017fair} aims to ensure that the proportion of each protected group within every cluster is similar to that of the overall population.
To this end, a variety of algorithms have been proposed to minimize a given clustering objective (e.g., $K$-means clustering cost) while satisfying predefined fairness constraints \cite{bera2019fair, backurs2019scalable, li2020deep, esmaeili2021fair, ziko2021variational, zeng2023deep}.

Most fair clustering algorithms are based on the $K$-means (center) algorithm which searches for good cluster centers and a good assignment map (mapping each instance to one of the given cluster centers) under fairness constraints.
A critical limitation of such existing fair clustering algorithms is that the fairness of a given assignment map depends on the entire training data and thus learning the optimal fair assignment map is computationally demanding. 
In fact, the assignment of each data point must be learned along with the cluster centers under a fairness constraint, so the number of learnable parameters is proportional to the sample size. Moreover, a mini-batch learning algorithm, a typical tool to scale-up a given learning algorithm, would not be possible since the fairness of a given assignment map cannot be evaluated on a mini-batch.

Another limitation of existing fair clustering algorithms is that they are based on the $K$-means algorithm, which requires data to lie in a metric space, thus making it difficult to process non-metric data such as categorical data.

The aim of this study is to develop a new fair clustering algorithm that resolves the aforementioned limitations.
The proposed algorithm is based on model-based clustering, and we estimate the parameters under a fairness constraint.

Note that the existing methods are geometric and distance metric-based, yielding deterministic assignments, whereas model-based fair clustering assumes a probabilistic model for the data and uses likelihood-based estimation, providing probabilistic assignments.

An important advantage of the proposed fair clustering algorithm is that the number of learnable parameters is independent of the size of training data and thus can be applied to large-scale datasets easily.
Moreover, the development of a mini-batch algorithm to provide approximately fair clusters is possible with theoretical guarantees.
In addition, non-metric data can be processed without much difficulty as long as the likelihood can be well-specified.

The main contributions of this paper are summarized as follows.
\begin{itemize}
    \item We propose a novel fair clustering algorithm based on a probabilistic model that facilitates scalable mini-batch learning and dynamically recalculates fair assignments for large test datasets.
    \item Unlike most existing fair clustering methods, our proposed algorithms are built upon a probabilistic mixture model and naturally extend to categorical data.
    \item We establish a theoretical guarantee that the proposed model generalizes well to unseen data, and validate it through comprehensive empirical evaluations.
\end{itemize}

\section{Review of Existing Fair Clustering Algorithms}

There are three categories of existing methods for fair clustering.
(1) Pre-processing methods \citep{chierichetti2017fair, backurs2019scalable} use the concept of fairlets-small subsets of the data satisfying fairness-and then perform clustering on the space of fairlets.
(2) In-processing methods \citep{kleindessner2019guarantees, li2020deep, ziko2021variational, zeng2023deep, kim2025fairclusteringalignment} optimize both the assignments and cluster centers to find a good clustering among those satisfying a given fairness level.
(3) Post-processing methods \citep{bera2019fair, NEURIPS2020_a6d259bf} only build fair assignments for fixed cluster centers.
Here, we briefly review in-processing methods to explain the limitations of existing fair clustering algorithms.

Let $\mathcal{D} = \{x_1, \dots, x_N\}$  be the given training data to be clustered, and let $s_1,\ldots,s_N$ be sensitive attributes corresponding to $x_1,\ldots,x_N,$ where $s_i\in [M]$ for all $i \in [N].$
Here, $M$ is the number of sensitive groups. 
Let $\mathcal{D}^{(s)}=\{x_i\in \mathcal{D}: s_i=s\}$ for $s\in [M].$ 
We first focus on the case $M=2$ for the following discussion, and propose a way of extending the algorithm in the \textit{Modification of FMC to multinary sensitive attributes} section.

A typical clustering algorithm (without a fairness constraint) receives $\mathcal{D}$ and the number of clusters $K$ as inputs and yields the cluster centers $\mu_k, k\in [K]$, and the assignment map $\mathcal{A}:\mathcal{D} \rightarrow [K]$ such that $\mathcal{A}(x_i)$ is the cluster to which $x_i$ belongs.
The algorithm searches $\{\mu_k,\, k\in [K]\}$ and $\mathcal{A}$ that minimizes the clustering cost $L(\mathcal{D};K):=\sum_{i=1}^{N} \rho(x_i, \mu_{\mathcal{A}(x_i)})$ for a given metric $\rho.$ 

A fair clustering algorithm receives $\mathcal{D}^{(s)}$, $s\in [2]$ and $K$ as inputs and searches $\mu_k,\, k\in [K]$ and $\mathcal{A}$ that minimize $L(\mathcal{D};K)$ subject to the constraint that $\mathcal{A}$ is fair, i.e., $|\{i:\mathcal{A}(x_i)=k,\, s_i=1\}| / N_1 \approx |\{i:\mathcal{A}(x_i)=k,\, s_i=2\}| / N_2$ for all $k\in [K],$ where $N_s=|\mathcal{D}^{(s)}|.$
In certain cases, fair assignment maps do not exist.
To ensure existence, we consider a soft assignment map $\mathcal{A}_{soft}:\mathcal{D}\rightarrow \mathcal{S}^K,$ where $\mathcal{S}^K$ is the $(K-1)$-dimensional simplex \cite{kim2025fairclusteringalignment}.
Here, $\mathcal{A}_{soft}(x_i)_k$ is interpreted as the probability of $x_i$ belonging to cluster $k.$

A typical procedure for in-processing fair clustering algorithms is summarized as follows.
A fair clustering algorithm first finds the cluster centers using a fairness-agnostic clustering algorithm.
It then updates the (soft) assignment map while the cluster centers are fixed.
In turn, it updates the cluster centers while the assignment map is fixed, and it repeats these two updates iteratively until convergence. 

A problem in such fair clustering algorithms is that updating the assignment map is computationally demanding when the sample size is large, as the full optimization has cubic complexity: the number of learnable parameters is $N$, since $\mathcal{A}(x_i), i \in [N]$ must be updated simultaneously.
Moreover, a mini-batch algorithm—a typical way to scale-up a learning procedure—is not possible, since the fairness of a given assignment map depends on the entire dataset.

Another problem arises when assigning test data to clusters.
Assigning newly arrived data fairly after learning fair clusters is challenging, because the learned assignment map is defined only on the training data.
A practical solution would be to obtain the optimal assignment map for test data while the cluster centers are fixed.
This naive solution, however, is not applicable when test data arrive sequentially.

The aim of this paper is to propose a new fair clustering algorithm whose number of parameters is independent of the size of the training data and is thus easy to scale-up.
Moreover, it is possible to develop a mini-batch algorithm to provide an approximately fair clustering with theoretical guarantees.
In addition, the algorithm yields a parameterized assignment map, and thus assigning newly arrived unseen data is easy.

\section{Finite Mixture Models}

One renowned model-based clustering method is the finite mixture model, which assumes that
$x_1,\ldots,x_N\in\mathbb{R}^d$ are independent realizations of a $d$-dimensional random variable $X$ with density given by
\begin{equation}\label{eq:standard_mm}
    X \sim \mathbf{f}(\cdot; \Theta) := \sum_{k=1}^K \pi_k f(\cdot ; \theta_k),
\end{equation}
where \(K\) is the number of mixture components and \(f(\cdot ; \theta_k)\) is the density of a parametric distribution with parameter \(\theta_k\) for $k\in[K]$. The Gaussian distribution is commonly used for $f(\cdot;\theta)$ but other parametric distributions can also be used (e.g., the categorical (multinoulli) distribution for categorical data).
The mixture weight vector \(\boldsymbol{\pi} = (\pi_1,\dots,\pi_K)\) lies on the \((K-1)\)-dimensional simplex. 
From a clustering perspective, $K$ is the number of clusters; $f(\cdot;\theta_k)$ is the density of the $k$th component (cluster); and $\pi_k$ is the probability of belonging to the $k$th cluster.
The parameter \(\Theta = (\boldsymbol{\pi}, (\theta_1,\ldots,\theta_K))\) can be estimated by maximum likelihood (MLE), which maximizes the log-likelihood given by
$$\ell(\Theta \mid \mathcal{D})=\sum_{i=1}^N \log\left(\sum_{k=1}^K \pi_k f(x_i;\theta_k)\right).$$
Various algorithms have been proposed to compute the MLE in the finite mixture model.
Among these, the Expectation-Maximization (EM) algorithm \cite{titterington1985statistical, 6796884, 1b26ac91-1d67-38ea-b761-bcada2498f5c} and gradient-descent-based (GD) optimization algorithm \cite{10.1007/s11063-021-10599-3, 10.1007/s10107-019-01381-4} are most widely used.
\citet{xu1996convergence} showed that EM and GD for the finite mixture model are linked via a projection matrix. An advantage of EM is that the objective function is monotonically increasing without the need to set a learning rate.
In contrast, GD is more flexible and applicable to a broader range of differentiable objectives.

\subsection{Assignment Map for the Finite Mixture Model}

An equivalent formulation of the finite mixture model in \cref{eq:standard_mm} is given by:
\begin{eqnarray*}
    \footnotesize
     Z_1,\ldots,Z_N &\stackrel{\textup{i.i.d.}}\sim& {\rm Categorical}(\boldsymbol{\pi})\\
     X_i \mid Z_i &\sim& f(\cdot;\theta_{Z_i}), i\in [N].
\end{eqnarray*}
Here, from a clustering perspective, $Z_i$ is interpreted as the cluster index to which $x_i$ belongs.
Note that
\begin{equation}\label{eq:resp_fn}
   p(Z_i = k \mid X_i = x_i; \Theta) = \frac{\pi_k f (x_i; \theta_k) }{\sum_{l=1}^{K} \pi_l f(x_i; \theta_l)}.
\end{equation}
In this paper, we use (\ref{eq:resp_fn}) as the (soft) assignment map corresponding
to the finite mixture model with parameter $\Theta$ and denote it by $\psi_k(x_i;\Theta) = p(Z_i = k \mid X_i = x_i; \Theta)$ for $ k\in [K]$.

In practice, the hard assignment map $\psi_{hard}(x_i;\Theta)$ for any given $x_{i}$ can be determined
from the soft assignment map via $\psi_{hard}(x_i;\Theta)=\argmax_{k \in [K]} \{ \psi_{k}(x_i; \Theta) \}.$
In the case of the Gaussian mixture model, 
this hard assignment map is close to the optimal assignment map of the $K$-center algorithm
under regularity conditions.
See \cref{rmk:hard_soft} in the Appendix for a detailed discussion.

\section{Learning a Fair Finite Mixture Model}\label{sec:fairmm}

\subsection{Fairness Constraint for the Finite Mixture Model}

Let \(\mathcal{C} = \{C_1, \dots, C_K\}\) denote a given set of clusters of \(\mathcal{D}\) where \(C_k = \{x_i \in \mathcal{D} : Z_i = k\}\) is the \(k\)th cluster of \(\mathcal{D}\).
Let \(\mathcal{C}^1 = \{C^1_1,\dots,C^1_K\}\) and \(\mathcal{C}^2 = \{C^2_1, \dots, C^2_K\}\) denote the sets of clusters of \(\mathcal{D}^{(1)}\) and \(\mathcal{D}^{(2)}\), respectively, where \(C_k = C^1_k \cup C^2_k\).
The most commonly used fairness measure in clustering is \textit{Balance} \cite{chierichetti2017fair}, which is the minimum of the ratios of each sensitive attribute in every cluster.
The Balance of $\mathcal{C} = \{C_1,\ldots,C_K\}$ is defined as
\begin{equation}\label{def:Balance}
    \min_{k \in [K]} \min \left\{ |C^1_k| / |C^2_k|, |C^2_k| / |C^1_k| \right\}.
\end{equation}
Smaller Balance indicates less fair clusters, and vice versa.
\citet{bera2019fair} studied a clustering algorithm that finds optimal clusters under a Balance constraint by constructing fair assignments.

A similar but numerically easier measure for fairness than Balance is a difference-based rather than ratio-based measure.
That is, we define \textit{Gap} as
\begin{equation}\label{fair_level_origin}
    \max_{k\in [K]} \left | \frac{|C^1_k|}{N_1} - \frac{|C^2_k|}{N_2} \right|.
\end{equation}
A larger Gap means less fairness of clusters and vice versa.
\citet{kim2025fairclusteringalignment} considered the additive version of Gap defined by
\begin{equation}\label{fair_level_additive}
    \sum_{k\in [K]} \left | \frac{|C^1_k|}{N_1} - \frac{|C^2_k|}{N_2} \right|,
\end{equation}
and developed a fair clustering algorithm.

A natural extension of Gap for the assignment map $\psi_k(\cdot;\Theta),k\in [K]$, is
\begin{equation} \label{eq:GAP}
    \footnotesize
    \Delta(\Theta) := \max_{k\in [K]} \left\vert \frac{\sum_{x_{i} \in \mathcal{D}^{(1)}} \psi_{k}\left(x_i ; \Theta \right)}{N_{1}} - \frac{\sum_{x_j \in \mathcal{D}^{(2)}} \psi_{k} \left(x_j; \Theta \right)}{N_{2}} \right\vert.
\end{equation}
\normalsize
In this paper, we propose to estimate $\Theta$ under the constraint $\Delta(\Theta) \le \varepsilon$
for a prespecified fairness level $\varepsilon \ge 0$ (the smaller $\varepsilon,$ the fairer).
That is, the objective of our proposed fair clustering is:
\begin{equation}\label{obj const}
    \max_{\Theta} \ell(\Theta \mid \mathcal{D}) \text{ subject to } \Delta(\Theta) \le \varepsilon.
\end{equation}
Given a Lagrange multiplier \(\lambda \ge 0\), we can alternatively maximize
\(\ell(\Theta \mid \mathcal{D}; \lambda) := \ell(\Theta \mid \mathcal{D}) - \lambda \Delta(\Theta)\).
We abbreviate \(\Delta(\Theta)\) as \(\Delta\) where appropriate.

For the algorithms presented below, we identify the maximizing index \(k\) in \cref{eq:GAP} and then evaluate \(\Delta\) using that \(k\).

We use Gap instead of Balance for numerical stability when applying gradient-descent optimization.
Regarding the relationship between Balance and Gap, 
we note that:
(i) \cite{kim2025fairclusteringalignment} theoretically showed that $\vert \textup{Balance} - \min(n_{0}/n_{1}, n_{1}/n_{0}) \vert \le \textup{Additive Gap in \cref{fair_level_additive}}$ (where $\min(n_{0}/n_{1}, n_{1}/n_{0})$ is the balance of perfectly fair clustering),
and (ii) our numerical experiments show that smaller $\Delta$ corresponds to a larger Balance (\cref{fig:delta_balance} in the Appendix).

\subsection{Learning Algorithms: FMC-GD and FMC-EM}
To optimize the objective in \cref{obj const}, we employ gradient-descent (GD) and expectation–maximization (EM) algorithms, with pseudocode in \cref{alg:fairGD,alg:fairEM}. We hereafter refer to these as FMC-GD (Fair Model-based Clustering with Gradient Descent) and FMC-EM (Fair Model-based Clustering with Expectation–Maximization), respectively.
\begin{algorithm}[h]
\caption{FMC-GD (Gradient-Descent)\\
In practice, we set \((T, \gamma) = (10000, 10^{-3})\)}\label{alg:fairGD}
    \begin{algorithmic}[0]
        \STATE \textbf{Input}: Dataset \(\mathcal{D} = \{x_1,\dots,x_N\}\), Number of clusters \(K\), Lagrange multiplier \(\lambda\), Maximum number of iterations \(T\) and Learning rate \(\gamma\).
        \STATE \textbf{Initialize}: \(\Theta^{[0]} = (\boldsymbol{\eta}^{[0]}, \boldsymbol{\theta}^{[0]})\)        
        \WHILE{$\Theta$ has not converged and $t < T$}
            \STATE 
            $\boldsymbol{\theta}^{[t+1]} \gets \boldsymbol{\theta}^{[t]} - \gamma \frac{\partial}{\partial \boldsymbol{\theta}} \left( - \ell(\Theta^{[t]} \mid X) + \lambda \Delta(\Theta^{[t]}) \right)$
            
            \STATE 
            $\boldsymbol{\eta}^{[t+1]} \gets \boldsymbol{\eta}^{[t]} - \gamma \frac{\partial}{\partial \boldsymbol{\eta}} \left( - \ell(\Theta^{[t]} \mid X) + \lambda \Delta(\Theta^{[t]}) \right)$
            
            \STATE \(\boldsymbol{\pi}^{[t+1]} \gets \text{softmax} (\boldsymbol{\eta}^{[t+1]}) \)
            \STATE $t \gets t+1$
        \ENDWHILE
    \end{algorithmic}
\end{algorithm}
\subsubsection{(1) FMC-GD}

FMC-GD maximizes $\ell(\Theta \mid \mathcal{D}; \lambda)$ using gradient-descent-based optimization.
To improve numerical stability of the gradients, we reparameterize the mixture weights as $\boldsymbol{\pi}=\operatorname{softmax}(\boldsymbol{\eta})$.
The gradient formulas used in \cref{alg:fairGD} are provided in the Appendix, in the section \textit{Derivation of the Gradients in Algorithm 1}.

\subsubsection{(2) FMC-EM}

For FMC-EM, we consider the complete-data $Y = (\mathcal{D}, Z),$ where $Z=(Z_1,\ldots,Z_N)^\top,$
along with the complete-data log-likelihood $\ell_{comp}(\Theta \mid Y)$:
\begin{equation}
    \footnotesize
    \ell_{comp}(\Theta \mid Y) = \sum_{i=1}^N \sum_{k=1}^K \mathbb{I}(Z_{i} = k) \left\{ \log \pi_{k} + \log f(x_i; \theta_k) \right\}.
\end{equation}
As in the standard EM algorithm, FMC-EM iterates the E-step and M-step.
Let $\Theta^{[t]}$ be the updated parameter at the $t$th iteration.
Given $\mathcal{D}$ and $\Theta^{[t]},$ the $Q$ function (computed in the E-step and maximized in the M-step) is
\begin{equation}
    \footnotesize
    Q(\Theta \mid \Theta^{[t]}) = \mathbb{E}_{Z \mid \mathcal{D}; \Theta^{[t]}} \ell_{comp}(\Theta \mid Y).
\end{equation}
See \cref{eq:emm_q_function} in the Appendix for a detailed derivation of $Q(\Theta \mid \Theta^{[t]}).$
For FMC-EM, the \(Q\) function is modified to
\begin{equation}\label{eq:q_fair}
    \footnotesize
    Q_{fair}(\Theta \mid \Theta^{[t]}; \lambda) = \mathbb{E}_{Z \mid \mathcal{D}, \Theta^{[t]}} \left( \ell_{comp}(\Theta \mid Y) - \lambda \Delta(\Theta) \right).
\end{equation}
We compute $Q_{fair}$ in the E-step and update $\Theta$ to maximize $Q_{fair}$ in the M-step.
Since maximizing $Q_{fair}$ admits no closed-form solution, we instead apply gradient-based optimization to ensure an increase at each iteration.
This procedure is a special case of the Generalized EM (GEM) algorithm \cite{dempster1977maximum}, which requires only that each update of $\Theta$ increase the $Q$ function. 
To sum up, given $\Theta^{[t]}$ at the \(t\)th iteration, we update the parameters \(\Theta^{[t+1]}\) via gradient-descent-based optimization, choosing the learning rate to satisfy
\begin{equation}
    \footnotesize
    Q_{fair} (\Theta^{[t+1]} \mid \Theta^{[t]}; \lambda) \ge Q_{fair} (\Theta^{[t]} \mid \Theta^{[t]} ; \lambda).
\end{equation}
See \cref{fig:variations} in the Appendix for the empirical convergence of FMC-EM.

\begin{algorithm}[h]
    \caption{FMC-EM (Expectation-Maximization)\\
    In practice, we set \((T, R, \gamma) = (200, 10, 10^{-2})\)}\label{alg:fairEM}
    \begin{algorithmic}
        \STATE \textbf{Input:} Dataset \(\mathcal{D} = \{x_1,\dots,x_N\}\), Number of clusters \(K\), Lagrange multiplier \(\lambda\), Maximum numbers of iterations \(T, R\) and Learning rate \(\gamma\).
        \STATE \textbf{Initialize:} \(\Theta^{[0]} = (\boldsymbol{\eta}^{[0]}, \boldsymbol{\theta}^{[0]})\)
        \WHILE{$Q_{fair}$ has not converged and $t < T$}
        \STATE Compute $Q_{fair}(\Theta \mid \Theta^{[t]}; \lambda)$ in \cref{eq:q_fair}
            \WHILE{$r < R$}
                \STATE $\boldsymbol{\theta}_{(r+1)}^{[t]} \gets \boldsymbol{\theta}_{(r)}^{[t]} - \gamma \frac{\partial}{\partial \boldsymbol{\theta}} \left( - Q_{fair}(\Theta ; \Theta^{[t]}, \lambda) \right)$
                
                \STATE $\boldsymbol{\eta}_{(r+1)}^{[t]} \gets \boldsymbol{\eta}_{(r)}^{[t]} - \gamma \frac{\partial}{\partial \boldsymbol{\eta}} \left( - Q_{fair}(\Theta ; \Theta^{[t]}, \lambda) \right)$
                
                \STATE \(\boldsymbol{\pi}_{(r+1)}^{[t]} \gets \text{softmax } (\boldsymbol{\eta}_{(r+1)}^{[t]}) \)
                \STATE $r \gets r+1$
            \ENDWHILE
            \STATE \(\boldsymbol{\theta}^{[t+1]} \gets \boldsymbol{\theta}^{[t]}_{(R)}\)
            \STATE \(\boldsymbol{\pi}^{[t+1]} \gets \boldsymbol{\pi}^{[t]}_{(R)}\)
            \STATE $t \gets t+1$
        \ENDWHILE
    \end{algorithmic}
\end{algorithm}

\subsection{Mini-Batch Learning with Sub-Sampled $\Delta$}

When the size of the dataset is large, reducing computational cost becomes a critical concern, and the use of mini-batches can be a compelling approach.
The objective function of FMC consists of two terms: the log-likelihood and the fairness penalty term $\Delta$.
However, note that a mini-batch algorithm can be applied to the log-likelihood term but cannot be easily applied to $\Delta$.
Hence, we need a technique to reduce the computational cost of computing the gradient of $\Delta$.

For this purpose, we use the fact that $\Delta$ can be closely approximated by its value on sub-sampled data.
Let $\mathcal{D}_{n}$ be a dataset of size $n$ obtained by randomly sampling $n$ data points from $\mathcal{D}$,
and let $\Delta(\Theta;\mathcal{D}_n)$ be the $\Delta$ computed on $\mathcal{D}_n.$
Under regularity conditions, it can be shown that fair clusters under the sub-sampled $\Delta$ constraint are approximately fair in terms of the (population) $\Delta$ constraint.
For simplicity, we focus on the Gaussian mixture model: let $f(x;\theta)$ be a Gaussian distribution with mean vector $\mu$ and covariance matrix $\Sigma$, and thus $\Theta=(\boldsymbol{\pi},((\mu_1,\Sigma_1),\ldots, (\mu_K,\Sigma_K))).$
For given positive constants $\xi,\zeta$ and $\nu,$ let
 $\Phi_{\xi,\zeta,\nu}=\{\Theta: \xi<\min_k \{\pi_k\} \le \max_k \{\pi_k\} <1-\xi,
\max_k \|\mu_k\| \le \nu, \zeta< \min_k \lambda_{\min} (\Sigma_k) \le\max_k \lambda_{\max}(\Sigma_k) < 1/\zeta\},$
where $\lambda_{\min}$ and $\lambda_{\max}$ are the smallest and largest eigenvalues, respectively.
The following proposition gives a bound on the error between $\Delta(\Theta;\mathcal{D}_n)$ and $\Delta(\Theta)$.
To avoid technical difficulties, we assume that $\mathcal{D}_{n}$ is obtained by sampling with replacement from $\mathcal{D}.$
The proof is given in the Appendix.

\begin{proposition}\label{prop:generalization}
    Let $n_s=|\{x_i\in \mathcal{D}_n: s_i=s\}|$ for $s\in \{1, 2\}.$
    Then, with probability at least $1-\delta,$ we have 
    \begin{equation}
        \footnotesize
        \sup_{\Theta\in \Phi_{\xi,\zeta,\nu}} |\Delta(\Theta)-\Delta(\Theta;\mathcal{D}_n)|
        \le 4C\sqrt{\frac{d}{n'}}+8\sqrt{\frac{2\log(8/\delta)}{n'}}
    \end{equation}
    for some constant $C=C(\xi,\zeta,\nu,x_{\max}),$ where $x_{\max}=\max_{x\in \mathcal{D}} \|x\|_2$ and
    $n'=\min\{n_1, n_2\}.$
\end{proposition}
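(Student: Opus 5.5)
Condition on the group counts $n_1,n_2$. Given these counts, the points of $\mathcal{D}_n$ with $s_i=s$ are i.i.d.\ draws, uniform over $\mathcal{D}^{(s)}$, because sampling is with replacement. For each $k$ and $s$ set $P_s\psi_k(\Theta)=N_s^{-1}\sum_{x\in\mathcal{D}^{(s)}}\psi_k(x;\Theta)$, and let $\hat P_s\psi_k(\Theta)$ be the corresponding average over the $n_s$ sampled points. Since $|\max_k|a_k|-\max_k|b_k||\le\max_k|a_k-b_k|$ and $||u|-|v||\le|u-v|$, we get
\[
\sup_{\Theta}|\Delta(\Theta)-\Delta(\Theta;\mathcal{D}_n)|\le \sum_{s=1}^{2}\sup_{\Theta\in\Phi_{\xi,\zeta,\nu}}\max_{k\in[K]}\bigl|P_s\psi_k(\Theta)-\hat P_s\psi_k(\Theta)\bigr|.
\]
This reduces the claim to a uniform deviation bound for the function class $\mathcal{F}=\{x\mapsto\psi_k(x;\Theta):\Theta\in\Phi_{\xi,\zeta,\nu},k\in[K]\}$, taken separately for each group. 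I would handle $\max_k$ by treating $k$ as part of the index of $\mathcal{F}$; alternatively, a union over $K$ can be absorbed into $C$.

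For each group, $\mathcal{F}$ takes values in $[0,1]$. The standard symmetrization and McDiarmid argument then gives, with probability at least $1-\delta/2$,
\[
\sup_{f\in\mathcal{F}}|P_sf-\hat P_sf|\le 2\mathfrak{R}_{n_s}(\mathcal{F})+c\sqrt{\log(8/\delta)/n_s}.
\]
The two-sided bound costs a further factor in $\delta$, which explains the $8/\delta$. Replacing $n_s$ by $n'=\min\{n_1,n_2\}$ and summing over $s$ produces the $8\sqrt{2\log(8/\delta)/n'}$ term, after tracking constants.

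The main obstacle is showing that the Rademacher complexity satisfies $\mathfrak{R}_{n}(\mathcal{F})\le C\sqrt{d/n}$. I would write
\[
\psi_k(x;\Theta)=\operatorname{softmax}_k\bigl(g_1(x),\dots,g_K(x)\bigr),\qquad g_l(x)=\log\pi_l-\tfrac12\log|\Sigma_l|-\tfrac12(x-\mu_l)^\top\Sigma_l^{-1}(x-\mu_l).
\]
Softmax is $1$-Lipschitz from $\ell_\infty$ (or $\ell_2$) into each coordinate. A vector-contraction inequality (Maurer) therefore reduces the problem to bounding the complexities of the classes $\{g_l\}$, at the cost of a factor depending on $K$.

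Each $g_l$ is a constant plus a quadratic form in $x$. On $\Phi_{\xi,\zeta,\nu}$ and for $\|x\|\le x_{\max}$ the following bounds hold: the constant is bounded in terms of $\xi,\zeta,d$; the linear part $\mu_l^\top\Sigma_l^{-1}x$ has coefficient norm at most $\nu/\zeta$; and the quadratic part $-\tfrac12\langle\Sigma_l^{-1},xx^\top\rangle$ is a linear functional of $xx^\top$ with operator norm of $\Sigma_l^{-1}$ at most $1/\zeta$. The linear part has Rademacher complexity $O(\nu x_{\max}/(\zeta\sqrt n))$. For the quadratic part, a Frobenius bound would introduce $\sqrt d$: use $\|\Sigma^{-1}\|_F\le\sqrt d/\zeta$ together with $\|xx^\top\|_F\le x_{\max}^2$. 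This is the source of the $\sqrt{d/n}$ rate.

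To get exactly $\sqrt{d}$ rather than worse powers, I would alternatively use a covering-number/Dudley argument over a parameter set of dimension $O(Kd^2)$, or simply accept the Frobenius route. Either way, all dependence on $K,\xi,\zeta,\nu,x_{\max}$ is absorbed into $C$. Combining gives the $4C\sqrt{d/n'}$ term. Finally, since the bound holds conditionally on every realization of $(n_1,n_2)$, it holds unconditionally.
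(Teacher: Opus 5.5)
Your skeleton is the paper's: the same reverse-triangle reduction to two one-group uniform deviations, each controlled at level $\delta/2$. This reproduces the constants $4C\sqrt{d/n'}$ and $8\sqrt{2\log(8/\delta)/n'}$. You are more careful than the paper in three places.
First, conditioning on $(n_1,n_2)$ is what actually makes the within-group draws i.i.d.\ uniform on $\mathcal{D}^{(s)}$.
Second, putting $k$ into the index of the class handles $\max_k$ without the union bound over $k$ that the paper leaves implicit. Since $\Phi_{\xi,\zeta,\nu}$ is invariant under relabelling components, $\{\psi_k(\cdot;\Theta):\Theta\in\Phi_{\xi,\zeta,\nu}\}$ is the same class for every $k$, so this costs nothing.
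Third, applying the softmax to log-densities is correct; the paper's $s_k=\pi_k\mathcal{N}(x;\mu_k,\Sigma_k)$ should be its logarithm.

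The genuine difference is the complexity step. The paper bounds the Lipschitz constant of $\Theta\mapsto\psi_k(x;\Theta)$ and runs Dudley's integral over a Euclidean ball in parameter space. You instead contract through the softmax (Maurer) and use norm bounds for classes linear in $x$ and in $xx^\top$. Yours is the route that can actually yield $\sqrt{d}$ in the data dimension. In the paper's covering lemma, $d$ is the dimension of the parameter vector, which here is of order $Kd^2$, and its radius $B$ also grows with $d$.

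One step, however, does not deliver what you claim. The constant in $g_l$ contains $-\tfrac12\log|\Sigma_l|$, which on $\Phi_{\xi,\zeta,\nu}$ ranges over an interval of length $d\log(1/\zeta)$. The class of such constants has Rademacher complexity of order $d\log(1/\zeta)/\sqrt{n}$, a factor $\sqrt{d}$ above the target. ``Absorbing it into $C$'' therefore makes $C$ depend on $d$, contrary to $C=C(\xi,\zeta,\nu,x_{\max})$. Your fallback goes the wrong way: Dudley over an $O(Kd^2)$-dimensional parameter set gives at best order $d\sqrt{K/n}$, not $\sqrt{d/n}$.

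The fix stays inside your framework.
\begin{itemize}
\item Softmax is shift-invariant, so subtract $\max_j c_j$ from all constants and write $\psi_k=\tau_k e^{h_k}/\sum_l \tau_l e^{h_l}$.
\item Here $\tau_l=e^{c_l-\max_j c_j}\in[0,1]$ and $h_l(x)=\mu_l^\top\Sigma_l^{-1}x-\tfrac12 x^\top\Sigma_l^{-1}x$, so $|h_l|\le Q:=(\nu x_{\max}+x_{\max}^2/2)/\zeta$, independently of $d$.
\item On the convex set $\{\tau\in[0,1]^K,\ \sum_l\tau_l\ge 1,\ |h_l|\le Q\}$ the denominator is at least $e^{-Q}$. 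Hence $(\tau,h)\mapsto\psi_k$ is $O(\sqrt{K}e^{2Q})$-Lipschitz, and you can extend it Lipschitz to all of $\mathbb{R}^{2K}$ before applying Maurer.
\item The contraction then charges the constant coordinates only $O(1/\sqrt{n})$, the linear part $O(\nu x_{\max}/(\zeta\sqrt{n}))$, and the quadratic part $O(\sqrt{d}\,x_{\max}^2/(\zeta\sqrt{n}))$.
\end{itemize}
This gives $C\sqrt{d/n}$ with $C$ free of $d$. It does depend on $K$, as the paper's constant does too.
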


Motivated by Proposition \ref{prop:generalization}, we propose to estimate $\Theta$ by maximizing

$$\ell(\Theta \mid \mathcal{D}) - \lambda \Delta(\Theta; \mathcal{D}_n),$$
where we apply mini-batch learning to the term $\ell(\Theta \mid \mathcal{D})$ while $\mathcal{D}_n$ is fixed.
In FMC-GD, we take gradient steps for $\ell(\Theta \mid \mathcal{D})$ on a given mini-batch, whereas in FMC-EM, we compute $\ell_{comp}(\Theta \mid Y)$ in the E-step only on a given mini-batch.

See \cref{alg:fairEM_minibatch} in the Appendix for details of the mini-batch learning algorithm with the sub-sampled $\Delta$ constraint.
Moreover, \citet{karimi:hal-02334485} studied convergence properties of the mini-batch EM algorithm, and we show a similar result for the mini-batch GEM in \cref{cor:converge} in the Appendix.

We emphasize that neither mini-batch learning nor a sub-sampled fairness constraint is possible in existing fair clustering algorithms, since the assignment map for the entire dataset must be learned simultaneously.
The key innovation of the proposed fair finite mixture model is to use the parametric model in \cref{eq:resp_fn} as the (soft) assignment map.
Note that learning the parameters is computationally easier than directly learning the assignment map for the entire dataset.

\subsection{Sub-Sample Learning}

Another advantage of FMC is that the functional form of the assignment map is available after training, which allows us to fairly assign newly arrived data (which are not available during the learning of $\Theta$) to clusters.
That is, we assign an unseen datum $x$ to cluster $k$ with probability $\psi_k(x;\hat{\Theta}),$ where $\hat{\Theta}$ is the estimated parameter vector.
However, this fair post-assignment is not possible for existing fair clustering algorithms, since the assignment map is defined only for the training data.

The fair post-assignment also enables learning the fair mixture model on randomly selected sub-samples when the entire dataset is too large.
Specifically, we estimate $\Theta$ on randomly selected sub-samples $\mathcal{D}_n$ of size $n$ from $\mathcal{D},$ and then assign the data in $\mathcal{D}\setminus\mathcal{D}_n$ using the estimated assignment map $\{\psi_k(x;\hat{\Theta})\}_{k \in [K]}$.
Proposition \ref{prop:generalization} guarantees that the clusters obtained in this way are approximately fair.
In the \textit{Numerical experiments} section, we empirically show that the reduction in performance, as measured by clustering cost and $\Delta$, due to sub-sampling is minimal, unless $n$ is too small.

\section{Numerical Experiments}\label{section_experiments}

In our numerical experiments, we evaluate FMC by analyzing several real-world datasets:
(i) on three moderate-scale real datasets, we show that FMC is competitive with baseline methods; and
(ii) on a large-scale dataset consisting of millions of instances, FMC outperforms the baselines while requiring significantly less computational time.
In addition, we investigate mini-batch learning and sub-sample learning of FMC, and show that the two algorithms perform similarly when the sub-sample size is not too small (e.g., 5\%).

We additionally conduct parameter impact studies on (i) the number of clusters $K$ and (ii) the choice of covariance structure when $\mathbf{f}$ is a Gaussian mixture.

\begin{figure*}[h]
    \centering
    \includegraphics[width=0.33\linewidth]{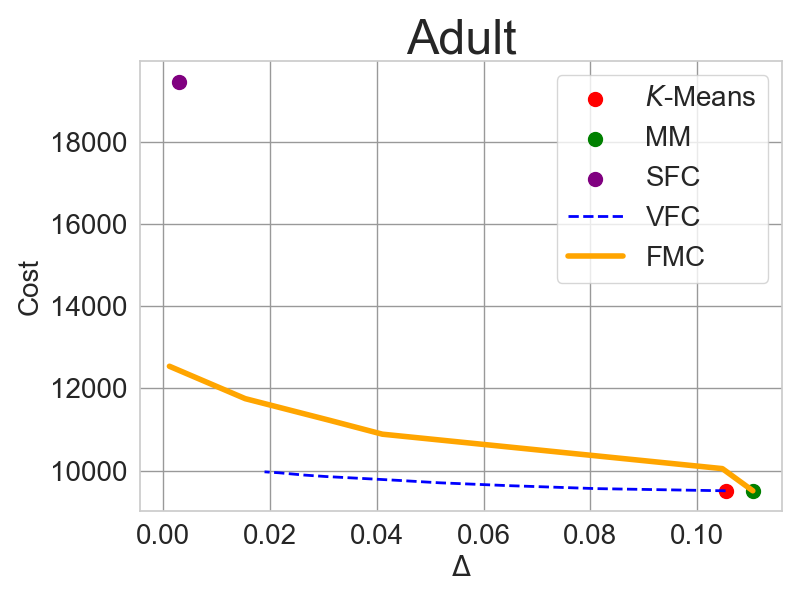}
    \includegraphics[width=0.33\linewidth]{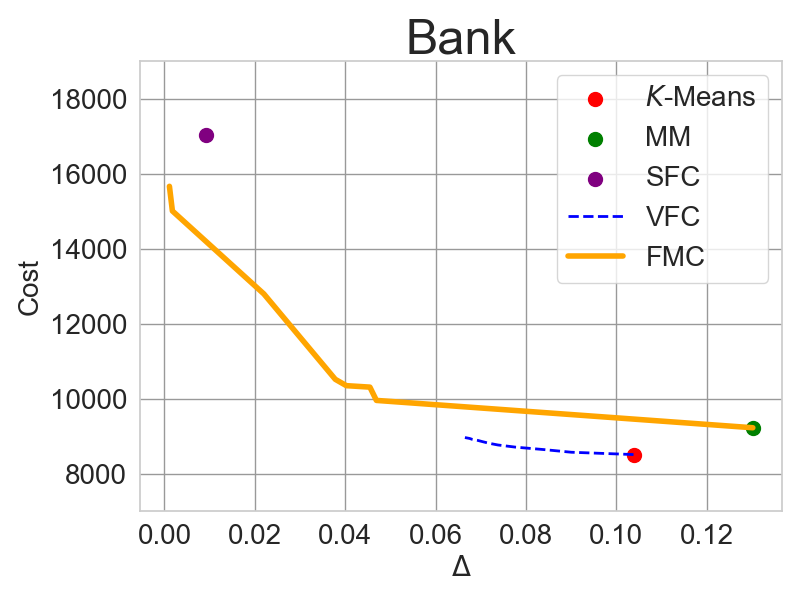}
    \includegraphics[width=0.33\linewidth]{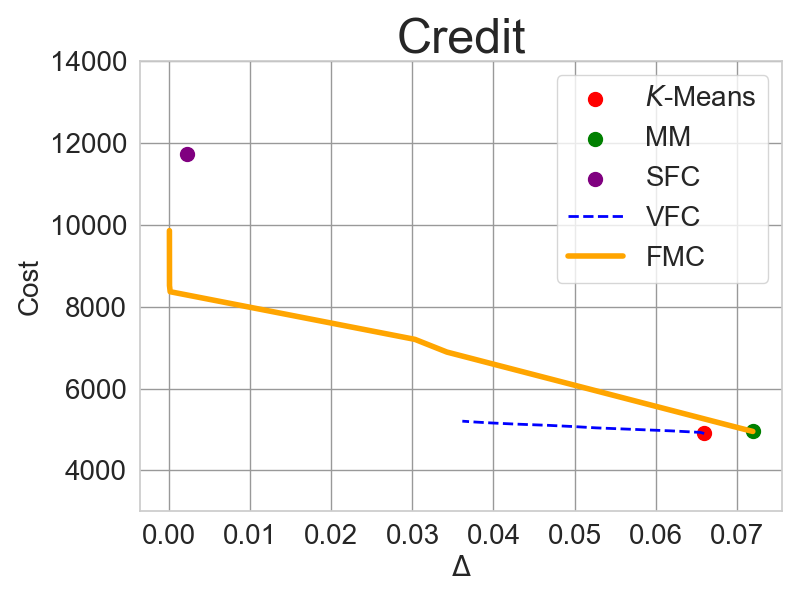}
    \normalsize
    \caption{Pareto front lines between $\Delta$ and Cost on Adult, Bank, and Credit datasets.
    See \cref{fig:three_L2_pareto_Balance} for the lines between Balance and Cost.
    See \cref{fig:three_noL2_pareto} for the similar results without $L_2$ normalization.
    }
    \label{fig:three_L2_pareto_delta}
\end{figure*}

\subsection{Experimental Settings}
\paragraph{Datasets}
We use four real-world datasets: Adult, Bank, Credit, and Census datasets, which are all available in the UCI Machine Learning Repository\footnote{\url{https://archive.ics.uci.edu/datasets}}.
Brief descriptions of the datasets are given below, with details in the Appendix.
\begin{itemize}
    \item {Adult dataset}: UCI Adult (1994 US Census), 32,561 samples; 5 continuous and 8 categorical features; sensitive attribute: gender (male 66.9\%, female 33.1\%).
    \item {Bank dataset}: UCI Bank Marketing, 41,008 samples; 6 continuous and 10 categorical features; sensitive attribute: marital status (married 60.6\%, unmarried 39.4\%).
    \item {Credit dataset}: UCI Default of Credit Card Clients, 30,000 samples; 4 continuous and 8 categorical features; sensitive attribute: education, aggregated into two groups (46.8\% vs. 53.2\%) or three groups (46.8\%, 35.3\%, 17.9\%).
    \item {Census dataset}: UCI Census (1990 US Census), 2,458,285 samples; 25 features; sensitive attribute: gender (male 51.53\%, female 48.47\%).
\end{itemize}

We analyze only the continuous variables in the main analysis, since we focus on the Gaussian mixture model.
Subsequently, we include categorical data to validate the applicability of FMC to categorical variables.
We standardize the continuous variables to have zero mean and unit variance.
Then, we apply $L_2$ normalization to the standardized data (i.e., so that each datum has unit $L_2$ norm),
because VFC \cite{ziko2021variational}, one of the baseline methods, often fails to converge without $L_2$ normalization.
Numerical results without $L_2$ normalization are given in the Appendix.

\paragraph{Algorithms}
For continuous data, we use the Gaussian mixture model (GMM) for $\mathbf{f}$ with an isotropic covariance matrix $\sigma^{2} \mathbb{I}_d.$ That is, the complete-data log-likelihood of the GMM is derived as 
\begin{equation}
\footnotesize
\begin{aligned}
    \footnotesize
    &\ell_{comp}(\Theta \mid Y) =\\
    &\sum_{i=1}^{N} \sum_{k=1}^{K} \mathbb{I}(Z_{i} = k) \left( \log \pi_{k} - \frac{d \log (2 \pi \sigma^{2})}{2} - \frac{\Vert x_{i} - \mu_{k} \Vert_{2}^{2}}{2 \sigma^{2}} \right),
\end{aligned}
\end{equation}
where $\Theta = ( \boldsymbol{\pi}, (\boldsymbol{\mu}, \sigma) ).$
For categorical data, we use the categorical (multinoulli) mixture model; details are given in \cref{eq:multinoulli_mixture,eq:multinoulli_mixture_lcomp} in the Appendix.

For fair clustering baseline methods, we consider three existing methods: SFC, VFC, and FCA.
SFC \citep{backurs2019scalable} is a scalable fair clustering algorithm based on fairlet decomposition, yielding near-perfectly fair clustering.
VFC \citep{ziko2021variational} performs fair clustering by adding a variational fairness constraint based on the Kullback–Leibler (KL) divergence.
FCA \citep{kim2025fairclusteringalignment} is a recently developed state-of-the-art method that performs fair clustering by matching distributions across different groups.

In addition, for fairness-unaware clustering methods, we consider $K$-means and Gaussian mixture models learned with EM or GD (denoted MM-EM and MM-GD).

\paragraph{Evaluation}
For fairness metrics, we consider $\Delta$ and Balance, and for clustering utility metrics, we consider Cost (the sum of distances from each data point to its assigned cluster center). For FMC, we use the hard assignment map when computing the fairness metrics to ensure comparability, since SFC and VFC only yield hard assignments.

\paragraph{Initial parameters}
We set the initial value of mixture weights to \(\boldsymbol{\pi} = (1/K,\dots,1/K)^\top\).
The initial means $\boldsymbol{\mu}$ are set to the cluster centers obtained by the $K$-means algorithm with random initialization.
For the variance, we set $\sigma = 1.0.$
See the Appendix for additional implementation details.

\subsection{Performance Comparison}

\paragraph{Comparing FMC-EM and FMC-GD}

\cref{fig:10_seeds_del_bal} presents Pareto front plots of $\Delta$ versus Cost
(with standard deviation bands from five random initializations) for FMC-EM and FMC-GD on the Adult dataset.
It shows that FMC-EM achieves a better cost-fairness trade-off with smaller variance.
Accordingly, we use FMC-EM for subsequent experiments and, where appropriate, refer to it simply as FMC .

\begin{figure}[h]
    \centering
    \includegraphics[width=0.49\linewidth]{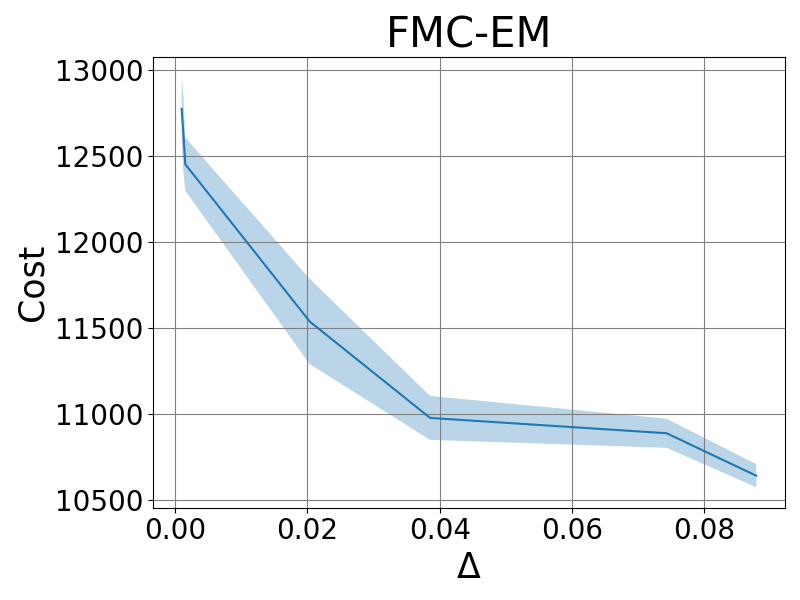}
    \includegraphics[width=0.49\linewidth]{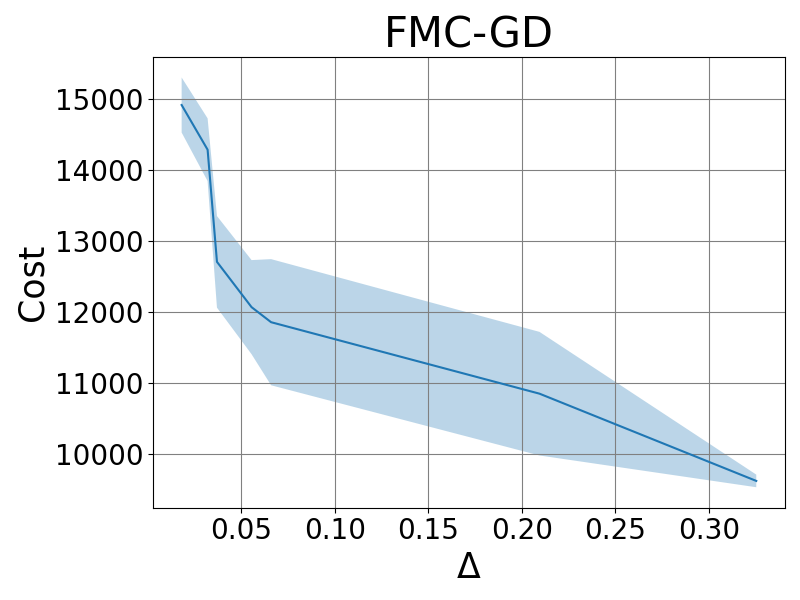}
    \normalsize
    \caption{
    Pareto front lines between $\Delta$ and Cost (with standard deviation bands obtained
    by five random initializations) of FMC-EM (left) and FMC-GD (right) on Adult dataset.
    See \cref{fig:10_seeds_remaining} in Appendix for similar results with respect to Balance and on other datasets.
    }
    \label{fig:10_seeds_del_bal}
\end{figure}

\paragraph{FMC vs. Baseline algorithms}

\cref{fig:three_L2_pareto_delta} shows the Pareto front plots of $\Delta$ versus Cost.
Note that SFC operates only at (near-perfect) fairness and therefore appears as a single point.
These results lead to two observations.

(i) FMC is better than SFC in terms of Cost at a near-perfect fairness level. 
This is not surprising, since FMC learns the cluster centers and the fair assignment map simultaneously, whereas SFC learns them sequentially. An additional advantage of FMC over SFC is the ability to control the fairness level, whereas SFC cannot.

(ii) VFC is slightly superior to FMC with respect to the cost–fairness trade-off.
This is expected because VFC minimizes Cost explicitly whereas FMC maximizes the log-likelihood, which is only indirectly related to Cost.
On the other hand, VFC fails to control the fairness level across the entire range.
For example, VFC does not provide a fair clustering with $\Delta<0.06$ on the Bank dataset.
Another advantage of FMC over VFC is robustness to data pre-processing methods.
As previously discussed, we apply $L_2$ normalization, since VFC fails to converge without it on the Credit and Census datasets (see \cref{fig:three_noL2_pareto,fig:census_noL2_pareto} in the Appendix).
We note that $L_2$ normalization is not a standard pre-processing step, since it maps points onto the unit sphere. In this sense, FMC is more broadly applicable.
See \cref{table:compare_cate_add_FCA} in the Appendix for additional comparisons between FMC and FCA.

\subsection{Analysis on Large-Scale Dataset}

This section provides experimental results on the Census dataset,
a large-scale dataset consisting of more than 2 million samples.

\paragraph{FMC vs. Baseline algorithms}

For FMC, we apply mini-batch learning with a sub-sampled $\Delta$, where the mini-batch and sub-sample sizes are set to 10\% of the full dataset.
\cref{fig:census_L2_pareto_delta} displays the results, showing that FMC is competitive with VFC and better than SFC. In addition, consistent with the results on moderate-scale data, FMC retains the advantage of controlling the fairness level across the entire range.
Moreover, FMC with mini-batch learning reduces computation time significantly (e.g., about 20x faster than VFC), as shown in \cref{table:large_data}.

\begin{figure}[h]
    \centering
    \includegraphics[width=0.49\linewidth]{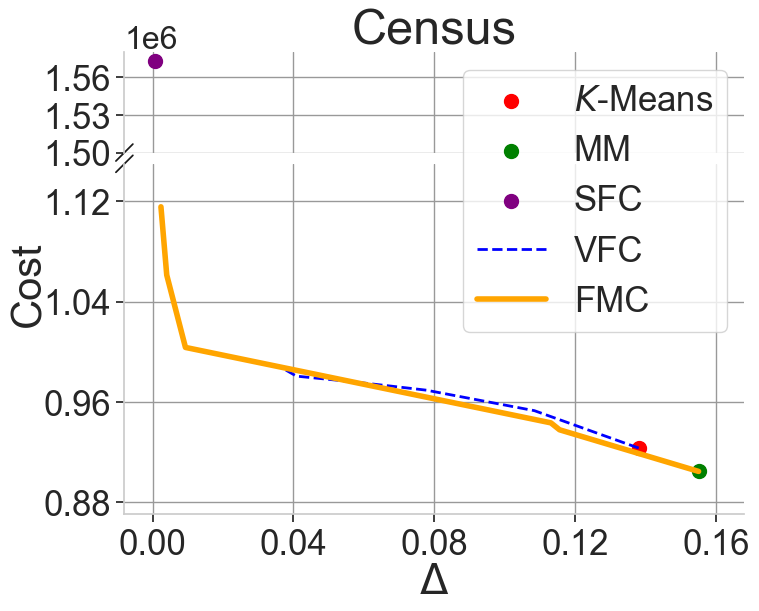}
    \includegraphics[width=0.49\linewidth]{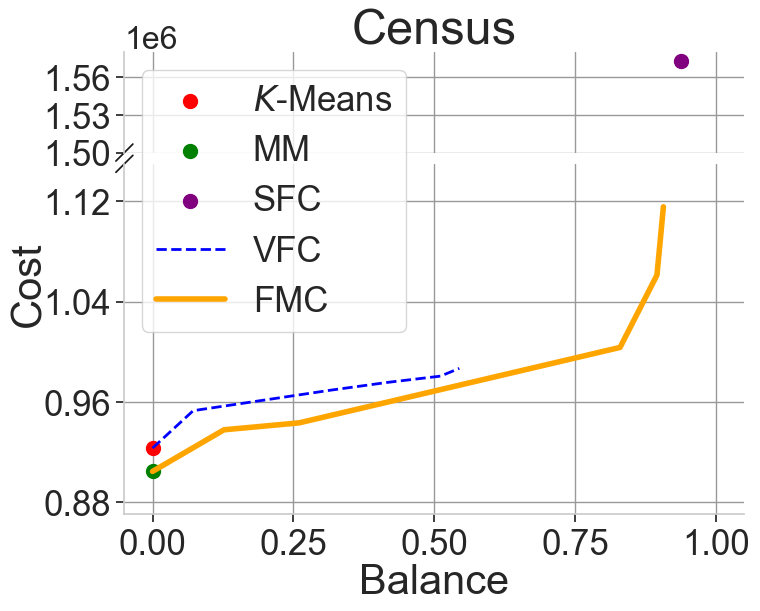}
    \normalsize
    \caption{Pareto front lines between $\{\Delta, \textup{ Balance}\}$ and Cost on Census dataset.
    See \cref{fig:census_noL2_pareto} in Appendix for the similar results without $L_2$ normalization.}
    \label{fig:census_L2_pareto_delta}
\end{figure}

\begin{table}[h]
    \centering
    \footnotesize
    \begin{tabular}{c|c}
        \toprule
        Algorithm & Time (std)
        \\
        \midrule
        VFC & 5053.0 (915.5)
        \\
        SFC & 2218.2 (535.1)
        \\
        FMC (mini-batch, $n = 0.1 N$) & 253.0 (12.1)
        \\
        \bottomrule
    \end{tabular}
    \caption{Comparison of the average computation time (seconds) with standard deviations (std) over five random 
    initials on Census dataset ($N=$ 2,458,285).}
    \label{table:large_data}
\end{table}

\paragraph{Mini-batch learning vs. Sub-sample learning}

We further investigate whether sub-sample learning empirically performs similarly to mini-batch learning by varying the sub-sample sizes in $\{1\%, 3\%, 5\%, 10\%\}.$
For example, when training with 5\% sub-samples, we evaluate the learned model on the full dataset (i.e., 
perform inference on the remaining 95\% of the data and then aggregate the inferred assignments with those of the training data).
\cref{table:large_data_subsampling_L2} compares the performance of
(i) sub-sample learning (sizes $\{1\%, 3\%, 5\%, 10\%\}$) and
(ii) mini-batch learning with batch size $10\%$.
Sub-sample learning with size $\ge 5\%$ yields performance similar to mini-batch learning but with lower computational cost. 
These results suggest that sub-sample learning is useful when the dataset is very large.

\begin{table}[h]
  \centering
  \footnotesize
  \begin{tabular}{l|cccc}
    \toprule
    Method $\left( \frac{n}{N} \right)$ & Cost ($\times 10^{5}$) & $\Delta$ & Balance & Time
    \\
    \midrule
    SS (1\%) & 11.167 & 0.005 & 0.851 & 53.7\%
    \\
    SS (3\%) & 10.988 & 0.004 & 0.894 & 65.8\%
    \\
    SS (5\%) & 10.973 & 0.003 & 0.883 & 71.4\%
    \\
    SS (10\%) & 10.916 & 0.003 & 0.881 & 77.2\%
    \\
    \midrule
    MB (10\%) & 10.857 & 0.002 & 0.896 & 100.0\%
    \\
    \bottomrule
  \end{tabular}
  
  \caption{Performances of mini-batch learning (MB) and sub-sample learning (SS) on Census dataset.
  The computation times of SS are measured by the relative ratio compared to the mini-batch learning with $10\%,$ and the Lagrangian $\lambda$ for each case is tuned to achieve the lowest Gap value i.e., $\Delta \approx 0.$
  See \cref{table:large_data_subsampling} in Appendix for the similar results without $L_2$ normalization.}
  \label{table:large_data_subsampling_L2}
\end{table}
\subsection{Categorical Data Analysis}
An additional advantage of FMC is the ability to incorporate categorical data into clustering by using the categorical (multinoulli) mixture model, whose formulation is given in \cref{eq:multinoulli_mixture,eq:multinoulli_mixture_lcomp} in the Appendix.
\cref{table:compare_cate_m=2,table:compare_cate_m=3} in the Appendix show that FMC performs well with categorical data.
Note that the baseline methods do not natively handle categorical variables; nontrivial modifications are required.

\subsection{Modification of FMC for Multinary Sensitive Attributes}
To apply FMC to multinary sensitive attributes, i.e., $M > 2,$ we propose the following modification of $\Delta$:
\begin{equation}\label{mutli_fairness_level}
    \footnotesize
    \max_{k \in [K]} 
    \frac{2}{M(M-1)} 
    \sum_{\substack{s_1,s_2 \in [M] \\ s_1 \neq s_2}} 
    \Delta_{s_1,s_2}
    ,
\end{equation}

where 
\begin{equation}
    \footnotesize
    \Delta_{s_1,s_2}:=\left\vert 
    \frac{
    \sum_{x_{i} \in \mathcal{D}^{(s_{1})}} \psi_k \left( x_i; \Theta\right)
    }{N_{s_1}} 
    -
    \frac{
    \sum_{x_{i} \in \mathcal{D}^{(s_{2})}} \psi_k \left( x_i; \Theta\right)
    }{N_{s_2}} 
    \right\vert.
\end{equation}
\normalsize
\cref{table:combined_m3} in the Appendix compares VFC and FMC on Bank and Credit datasets where the sensitive attribute has three categories. See the experimental details in the Appendix.
The results confirm that the modified FMC performs well.

\subsection{Parameter Impact Studies}
\paragraph{The number of clusters $K$}

We analyze how the number of clusters $K$ affects the performance of FMC, and we find that FMC performs well regardless of the choice of $K.$ 

That is, the negative log-likelihood decreases as $K$ increases, while the fairness level $\Delta$ remains sufficiently low.

See \cref{fig:ablation_num_k} in the Appendix for the results.

\paragraph{Choice of covariance structure in GMM}
We consider a diagonal covariance matrix instead of the isotropic one in the GMM to assess whether a more flexible covariance structure improves clustering.
\cref{fig:All_cov_compare} in the Appendix suggests that the diagonal covariance matrix is generally beneficial, which is an additional advantage of FMC over SFC and VFC.
\section{Discussion}

In this work, we have proposed FMC, a finite mixture model based fair clustering algorithm, which can be easily scaled-up through mini-batch and sub-sample learning. Moreover, FMC can handle both continuous and categorical data.

The main idea of FMC is to parameterize the assignment map. 
We leave the application of this idea to existing fair clustering algorithms (e.g., VFC) for future work.

Choosing the number of clusters $K$ is also important.
There is extensive research on estimating $K$ in finite mixture models \citep{schwarz1978estimating,biernacki2002assessing,richardson1997bayesian,miller2018mixture}.
We will investigate how to adapt these estimators to fair clustering in future work.


\section*{Acknowledgements}

This work was partly supported by the National Research Foundation of Korea (NRF) grant funded by the Korea government (MSIT) (No. 2022R1A5A7083908); 
Institute of Information\& communications Technology Planning \& Evaluation (IITP) grant funded by the Korea government (MSIT) (No.RS-2022-II220184, Development and Study of AI Technologies to Inexpensively Conform to Evolving Policy on Ethics);
Institute of Information \& communications Technology Planning \& Evaluation (IITP) grant funded by the Korea government (MSIT) [NO.RS-2021-II211343; Artificial Intelligence Graduate School Program (Seoul National University)],
and the National Research Foundation of Korea (NRF) grant funded by the Korea government (MSIT) (RS-2025-00556079).

\bibliography{ref}

\clearpage
\appendix
\onecolumn

\begin{center}
    \textbf{\huge Appendix for `Fair Model-based Clustering'}
\end{center}

\section{Proof of \cref{prop:generalization}}

In this proof, we overload notations for clarity by reusing several symbols (e.g., $x_{i}, f, \theta, \Theta, \mathcal{N}$) with meanings that differ from those in the main paper.
However, we believe that these overloaded symbols are only relevant within the context of this proof, so no confusion would arise.

\subsection{Auxiliary Definitions \& Lemmas}

\begin{definition}[Rademacher complexity of a function class \cite{mohri2018foundations}]
    Let \(\mathcal{F}\) be a function class of real-valued functions over some domain \(\mathcal{X}\). 
    Let \(\mathcal{D}_m=\left\{ x_1,\dots,x_m \right\}\) be \(m\)-samples drawn i.i.d. from a fixed distribution \(\mathcal{D}\). 
    Let \(\sigma_1,\dots,\sigma_m\) be independent random variables following the Rademacher distribution, 
    i.e. \(\Pr(\sigma_i=1)=\Pr(\sigma_i=-1)=1/2\) for \(\forall i\). 
    Then the empirical Rademacher complexity of \(\mathcal{F}\) is defined as:
    \begin{equation}
        \hat{\mathcal{R}}_m(\mathcal{F})=\frac{1}{m}\mathbb{E}_\sigma\left[ \underset{f\in\mathcal{F}}{\sup}\sum_{i=1}^{m}\sigma_if(x_i) \right]
    \end{equation}
    The Rademacher complexity of \(\mathcal{F}\) is defined as: 
    \begin{equation}
        \mathcal{R}_m(\mathcal{F})=\mathbb{E}_D\left[ \hat{\mathcal{R}}_m(\mathcal{F}) \right]
    \end{equation}
\end{definition}

\begin{lemma}[Rademacher generalization bound \cite{shalev2014understanding}]
\label{lem:rademacher}
    Let \(\mathcal{F}:=l\circ\mathcal{H}:=\left\{ z\mapsto l(h,z):h\in\mathcal{H} \right\}\)
    For given \(f\in\mathcal{F}\), let \(L_\mathcal{D}(f)=\mathbb{E}_{z\sim\mathcal{D}}[f(z)]\) and \(L_S(f)=\frac{1}{m}\sum_{i=1}^{m}f(z_i)\).
    Also, assume that for all \(z\) and \(h\in\mathcal{H}\) we have that \(|l(h,z)|\le c\). 
    Then, with probability of at least \(1-\delta\), for all \(h\in\mathcal{H}\), 
    \begin{equation}
        |L_\mathcal{D}(h)-L_S(h)|\le 2\hat{\mathcal{R}}_m(l\circ\mathcal{H})+4c\sqrt{\frac{2\log(4/\delta)}{m}}.
    \end{equation}
\end{lemma}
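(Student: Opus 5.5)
The plan is the standard symmetrization-plus-concentration argument, run once for each sign of the deviation and combined with a union bound. Throughout, $S=(z_1,\dots,z_m)$ is drawn i.i.d.\ from $\mathcal{D}$, and $\mathcal{F}=l\circ\mathcal{H}$ takes values in $[-c,c]$. I would introduce the two one-sided suprema
\[
\Phi_+(S)=\sup_{h\in\mathcal{H}}\bigl(L_\mathcal{D}(h)-L_S(h)\bigr),\qquad \Phi_-(S)=\sup_{h\in\mathcal{H}}\bigl(L_S(h)-L_\mathcal{D}(h)\bigr).
\]
The claim is then equivalent to bounding $\max\{\Phi_+(S),\Phi_-(S)\}$ by $2\hat{\mathcal{R}}_m(l\circ\mathcal{H})+4c\sqrt{2\log(4/\delta)/m}$ with probability at least $1-\delta$.

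\textbf{Step 1 (concentration of the suprema).} Replacing a single $z_i$ by $z_i'$ changes each $L_S(h)$ by at most $2c/m$, uniformly in $h$. Hence $\Phi_\pm$ have bounded differences $2c/m$. McDiarmid's inequality then gives, with probability at least $1-\delta/4$ each,
\[
\Phi_\pm(S)\le \mathbb{E}\,\Phi_\pm(S)+2c\sqrt{\log(4/\delta)/(2m)}=\mathbb{E}\,\Phi_\pm(S)+c\sqrt{2\log(4/\delta)/m}.
\]

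\textbf{Step 2 (symmetrization).} Take a ghost sample $S'$ and write $L_\mathcal{D}(h)=\mathbb{E}_{S'}L_{S'}(h)$. Pulling the supremum inside the expectation (Jensen) and inserting Rademacher signs $\sigma_i$ is valid because $z_i$ and $z_i'$ are exchangeable. This yields
\[
\mathbb{E}\,\Phi_+\le 2\mathcal{R}_m(\mathcal{F}).
\]
For $\Phi_-$ the same argument applies to the class $-\mathcal{F}$. Since $\sigma\stackrel{d}{=}-\sigma$, we have $\mathcal{R}_m(-\mathcal{F})=\mathcal{R}_m(\mathcal{F})$, so also $\mathbb{E}\,\Phi_-\le 2\mathcal{R}_m(\mathcal{F})$.

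\textbf{Step 3 (from $\mathcal{R}_m$ to $\hat{\mathcal{R}}_m$).} The map $S\mapsto\hat{\mathcal{R}}_m(\mathcal{F})$ also has bounded differences $2c/m$, since changing one point changes $\sigma_if(z_i)$ by at most $2c$. Applying McDiarmid once more, with probability at least $1-\delta/4$,
\[
\mathcal{R}_m(\mathcal{F})\le\hat{\mathcal{R}}_m(\mathcal{F})+c\sqrt{2\log(4/\delta)/m}.
\]

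\textbf{Step 4 (combining).} A union bound over the three events (total failure probability at most $3\delta/4\le\delta$) gives
\[
\Phi_\pm\le 2\hat{\mathcal{R}}_m+3c\sqrt{2\log(4/\delta)/m}\le 2\hat{\mathcal{R}}_m+4c\sqrt{2\log(4/\delta)/m}.
\]

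The main obstacle is the symmetrization step. It needs a careful justification of exchanging $z_i$ and $z_i'$ under the expectation, and the observation that the supremum over $\mathcal{H}$ may be taken after introducing the signs without losing more than the factor $2$. Measurability of the supremum also matters, which I would handle by assuming $\mathcal{H}$ is countable or separable. The remaining steps are routine bounded-difference computations.
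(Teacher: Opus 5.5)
Your proposal is correct and matches the standard argument: the paper gives no proof of its own and simply cites \cite{shalev2014understanding}, where the bound is derived exactly as you do it, by symmetrization, McDiarmid's inequality applied to both the one-sided supremum and $\hat{\mathcal{R}}_m$, and a union bound (your $3c$ is then relaxed to $4c$). The one point you add is worth keeping: the cited result is one-sided, and your handling of the absolute value through the class $-\mathcal{F}$ (with $\hat{\mathcal{R}}_m(-\mathcal{F})=\hat{\mathcal{R}}_m(\mathcal{F})$), with the Step 3 event shared by both signs so the total failure probability is $3\delta/4$, is what justifies the two-sided form with $\log(4/\delta)$ as the lemma states it.
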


Let $\mathcal{N}(u, \mathcal{F}, d)$ denotes the $u$-covering number of $\mathcal{F}$ with respect to metric $d$. 

\begin{lemma}[Dudley's theorem \cite{wolf2023mathematical}]
\label{lem:dudley}
    Assume that for any \(g\in\mathcal{G}\), \(||g||_\infty\le\gamma_0\) holds for some \(\gamma_0>0\). 
    Then, 
    \begin{equation*}
        \hat{R}_m(\mathcal{G})\le\underset{\epsilon\in[0,\gamma_0/2)}{\inf}4\epsilon+\frac{12}{\sqrt{m}}\int_{\epsilon}^{\gamma_0}\left( \log\mathcal{N}(\beta,\mathcal{G},||\cdot||_{m,2} \right)^{1/2}d\beta.
    \end{equation*}
\end{lemma}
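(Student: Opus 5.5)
The plan is the standard chaining argument, carried out in the empirical norm $\|g\|_{m,2}=\bigl(\frac1m\sum_{i=1}^m g(x_i)^2\bigr)^{1/2}$ with the sample $x_1,\dots,x_m$ held fixed. Fix $\epsilon\in[0,\gamma_0/2)$ and set $\beta_j=\gamma_0 2^{-j}$ for $j\ge 0$. For each $j\ge1$, let $\mathcal{G}_j$ be a minimal $\beta_j$-cover of $\mathcal{G}$ in $\|\cdot\|_{m,2}$, so that $|\mathcal{G}_j|=\mathcal{N}(\beta_j,\mathcal{G},\|\cdot\|_{m,2})$. Put $\mathcal{G}_0=\{0\}$; this is a $\beta_0$-cover because $\|g\|_{m,2}\le\|g\|_\infty\le\gamma_0$. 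For $g\in\mathcal{G}$, let $\pi_j(g)\in\mathcal{G}_j$ be a nearest element, so $\|g-\pi_j(g)\|_{m,2}\le\beta_j$.

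\textbf{Choice of the truncation level.} Take $J$ to be the largest integer with $\beta_J>2\epsilon$. If $\epsilon\le\gamma_0/4$, then $J\ge1$, and by maximality $\beta_{J+1}\le 2\epsilon$, hence $\beta_J=2\beta_{J+1}\le4\epsilon$, while $\beta_{J+1}=\beta_J/2>\epsilon$. If $\epsilon=0$, let $J\to\infty$ at the end of the argument. If $\gamma_0/4<\epsilon<\gamma_0/2$, take $J=0$; this case is trivial because $\beta_0=\gamma_0<4\epsilon$.

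\textbf{Chaining decomposition.} Write
\[
g=(g-\pi_J(g))+\sum_{j=1}^{J}\bigl(\pi_j(g)-\pi_{j-1}(g)\bigr),
\]
using $\pi_0(g)=0$. The supremum of a sum is at most the sum of the suprema, so $\hat{\mathcal R}_m(\mathcal G)$ splits into a remainder term and $J$ link terms.

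\textbf{Remainder term.} By Cauchy–Schwarz, $\frac1m\sum_i\sigma_i(g-\pi_J g)(x_i)\le\|g-\pi_J g\|_{m,2}\le\beta_J\le 4\epsilon$.

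\textbf{Link terms.} Fix $j\ge1$. The vector $\bigl((\pi_j g-\pi_{j-1}g)(x_i)\bigr)_{i\le m}$ ranges over a set of at most $|\mathcal G_j||\mathcal G_{j-1}|\le|\mathcal G_j|^2$ elements, since covering numbers are monotone in the radius. Each such vector has empirical norm at most $\beta_j+\beta_{j-1}=3\beta_j$. Massart's finite-class lemma, $\mathbb E_\sigma\max_{a\in A}\frac1m\sum_i\sigma_ia_i\le\max_{a}\|a\|_2\sqrt{2\log|A|}/m$, then bounds the $j$-th term by
\[
\frac{3\beta_j\sqrt{2\log|\mathcal G_j|^2}}{\sqrt m}=\frac{6\beta_j\sqrt{\log\mathcal N(\beta_j)}}{\sqrt m}.
\]

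\textbf{Conversion to the entropy integral.} This is the step that needs care. Since $\beta_j=2(\beta_j-\beta_{j+1})$ and $\beta\mapsto\mathcal N(\beta)$ is nonincreasing,
\[
6\beta_j\sqrt{\log\mathcal N(\beta_j)}=12(\beta_j-\beta_{j+1})\sqrt{\log\mathcal N(\beta_j)}\le12\int_{\beta_{j+1}}^{\beta_j}\sqrt{\log\mathcal N(\beta)}\,d\beta .
\]
Summing over $j=1,\dots,J$ gives at most $12\int_{\beta_{J+1}}^{\beta_1}$. This is at most $12\int_\epsilon^{\gamma_0}$ because $\beta_{J+1}>\epsilon$ and $\beta_1<\gamma_0$.

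\textbf{Conclusion.} Adding the remainder bound $4\epsilon$ to the link bound $\frac{12}{\sqrt m}\int_\epsilon^{\gamma_0}\sqrt{\log\mathcal N(\beta)}\,d\beta$ and taking the infimum over $\epsilon\in[0,\gamma_0/2)$ yields the lemma. The main points to check are the bookkeeping in the truncation index, namely $\beta_J\le4\epsilon$ together with $\beta_{J+1}>\epsilon$, and the handling of the edge cases $\epsilon=0$ (monotone limit $J\to\infty$) and $J=0$.
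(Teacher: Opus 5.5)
Your chaining argument is correct: dyadic scales $\beta_j=\gamma_0 2^{-j}$ starting from the trivial cover $\{0\}$, Cauchy--Schwarz for the remainder, Massart's lemma on each link (at most $\mathcal{N}(\beta_j)^2$ realized differences, each of empirical norm at most $3\beta_j$), and the identity $\beta_j=2(\beta_j-\beta_{j+1})$ to pass to the entropy integral. The paper gives no proof of its own and simply cites \cite{wolf2023mathematical}, and what you wrote is the standard chaining proof of this bound.

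There is one cosmetic slip. At $\epsilon=\gamma_0/4$ your $J$ equals $0$, not $J\ge 1$. This is harmless, because $\beta_J\le 4\epsilon$ and $\beta_{J+1}>\epsilon$ hold for every $\epsilon\in(0,\gamma_0/2)$, including the case $J=0$, so the case split is unnecessary.
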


\begin{lemma}[Upper bound of Rademacher complexity]
\label{lem:Rad_ub}
    Let $\mathcal{F}$ be a class parametric functions $\mathcal{F}=\{ f(\cdot;\theta): \theta\in\Theta \}\in[0,1]$, where $||\theta||_2\le B$ holds for $B>0$. 
    Assume that $f$ is $L$-Lipschitz, i.e. $|f(\cdot;\theta)-f(\cdot;\theta')|\le L||\theta-\theta'||_2$. 
    Then, we can bound the Rademacher complexity for some constant $C>0$ as follows: 
    \begin{equation}
        \hat{\mathcal{R}}_n(\mathcal{F})\le C LB\sqrt{\frac{d}{n}}. 
    \end{equation}
\end{lemma}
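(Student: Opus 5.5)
The plan is to combine Dudley's entropy integral (\cref{lem:dudley}) with a volumetric covering-number bound on the parameter ball. The Lipschitz assumption transfers that bound to the function class $\mathcal{F}$. Throughout, $d$ denotes the dimension of the parameter $\theta$, and the Lipschitz condition is read uniformly in the input: $\sup_x |f(x;\theta)-f(x;\theta')|\le L\|\theta-\theta'\|_2$.

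\emph{Step 1 (parameter cover).} Let $\mathbb{B}_B=\{\theta\in\mathbb{R}^d:\|\theta\|_2\le B\}$. By the standard volumetric argument, for $0<u\le B$ there is a $u$-net of $\mathbb{B}_B$ in $\|\cdot\|_2$ of cardinality at most $(3B/u)^d$. For $u\ge B$ the single point $\{0\}$ is already a $u$-net. Hence $\mathcal{N}(u,\mathbb{B}_B,\|\cdot\|_2)\le \max\{1,(3B/u)^d\}$.

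\emph{Step 2 (function cover).} Let $\{\theta_j\}$ be a $\beta/L$-net of $\mathbb{B}_B$. For any $\theta$, pick $\theta_j$ with $\|\theta-\theta_j\|_2\le\beta/L$. Then
\[
\|f(\cdot;\theta)-f(\cdot;\theta_j)\|_{n,2}\le \|f(\cdot;\theta)-f(\cdot;\theta_j)\|_\infty\le \beta .
\]
Therefore
\[
\mathcal{N}(\beta,\mathcal{F},\|\cdot\|_{n,2})\le \max\{1,(3LB/\beta)^d\}.
\]
In particular, the log-covering number vanishes for $\beta\ge 3LB$.

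\emph{Step 3 (Dudley).} Since $\mathcal{F}\subset[0,1]$, apply \cref{lem:dudley} with $\gamma_0=1$ and take $\epsilon\to 0$. Let $a=3LB$. Because the integrand is zero for $\beta\ge a$,
\[
\hat{\mathcal{R}}_n(\mathcal{F})\le \frac{12\sqrt d}{\sqrt n}\int_0^{\min\{1,a\}}\sqrt{\log(a/\beta)}\,d\beta
\le \frac{12\sqrt d}{\sqrt n}\int_0^{a}\sqrt{\log(a/\beta)}\,d\beta .
\]
Substituting $\beta=at$ gives
\[
\int_0^{a}\sqrt{\log(a/\beta)}\,d\beta=a\int_0^1\sqrt{\log(1/t)}\,dt=a\,\Gamma(3/2)=\frac{a\sqrt\pi}{2}.
\]
This yields $\hat{\mathcal{R}}_n(\mathcal{F})\le 18\sqrt{\pi}\,LB\sqrt{d/n}$, which is the claim with $C=18\sqrt\pi$.

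The only delicate point is Step 3. A naive bound of the integrand over all of $[0,1]$ would produce a term like $\sqrt{\log(LB)}$ when $LB$ is large, or a constant not proportional to $LB$ when $LB$ is small. Both problems are avoided by truncating the integral at $a=3LB$, which is valid because the covering number equals $1$ beyond that scale, and then extending the upper limit to $a$ before substituting. That rescaling is what makes the bound scale exactly linearly in $LB$.
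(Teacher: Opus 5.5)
Your proof is correct and follows the same route as the paper: you transfer the Euclidean-ball covering bound $d\log(3B/u)$ to $\mathcal{F}$ through the Lipschitz condition, plug it into Dudley's integral, and rescale so the bound is a constant times $LB\sqrt{d/n}$. The differences are cosmetic. You state explicitly why the integral can be truncated (the covering number is $1$ at large scales), whereas the paper silently integrates up to $LB$ instead of $\gamma_0=1$; you truncate at $3LB$ and obtain the explicit constant $18\sqrt{\pi}$ via $\Gamma(3/2)$.
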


\begin{proof}
    From the Lipschitzness, we have the following inequality: 
    \begin{equation}
        \mathcal{N}(u, \mathcal{F}, ||\cdot||_{n,2})\le \mathcal{N}(\frac{u}{L},\Theta, ||\cdot||_2), 
    \end{equation}
    where $\mathcal{N}(u, \mathcal{F}, d)$ denotes the $u$-covering number of $\mathcal{F}$ with respect to metric $d$. 

    Also, we know that the covering number of a $d$-dimensional Euclidean ball can be bounded \cite{vershynin2018high} as follows: 
    \begin{equation}
        \log\mathcal{N}(u,\Theta,||\cdot||_2)\le d\log(\frac{3B}{u})
    \end{equation}
    for $0<u\le B$. 

    Now we can apply \cref{lem:dudley} as follows: 
    \begin{align}
        \hat{\mathcal{R}}_n(\mathcal{F})&\le\frac{12}{\sqrt{n}}\int_0^{LB}\sqrt{\log\mathcal{N}(u,\mathcal{F},||\cdot||_{n,2})}\;du\\
        &\le\frac{12}{\sqrt{n}}\int_0^{LB}\sqrt{d\log\left(\frac{3LB}{u}\right)}\;du\\
        &=12LB\sqrt{\frac{d}{n}}\int_0^1\sqrt{\log(\frac{3}{t})}\;dt\\
        &=:CLB\sqrt{\frac{d}{n}}.
    \end{align}
\end{proof}

\subsection{Main Proof}

\begin{proof}[Proof of \cref{prop:generalization}]
    We start with the following inequality: 
    \begin{equation}
        a_k\le b_k+|a_k-b_k|\le\underset{l}{\max}b_l+\underset{l}{\max}|a_l-b_l|,
    \end{equation}
    holds for all $k$. 
    By taking maximum, we have: 
    \begin{equation}
        \underset{k}{\max}a_k\le\underset{k}{\max}b_k+\underset{k}{\max}|a_k-b_k|.
    \end{equation}
    Hence, we have:
    \begin{equation}
        \left\lvert \underset{k}{\max}a_k-\underset{k}{\max}b_k \right\rvert\le\underset{k}{\max}|a_k-b_k|. 
    \end{equation}
    Using the above inequality and the fact that $||a'-b'|-|a-b||\le|a-a'|+|b-b'|$, we have the following inequality: 
    \begin{align}
        &|\Delta(\Theta;\mathcal{D}_n)-\Delta(\Theta)|\\
        &\le\underset{k}{\max}\left\lvert \left\lvert \frac{1}{n_1}\sum_{i=1}^{n_1}\psi_k(x_i^{(1)};\Theta)-\frac{1}{n_2}\sum_{i=1}^{n_2}\psi_k(x_i^{(2)};\Theta) \right\rvert-\left\lvert \mathbb{E}[\psi_k(x_i^{(1)};\Theta)]-\mathbb{E}[\psi_k(x_i^{(2)};\Theta)] \right\rvert \right\rvert\\
        &\le\underset{k}{\max}\left\lvert \frac{1}{n_1}\sum_{i=1}^{n_1}\psi_k(x_i^{(1)};\Theta)-\mathbb{E}[\psi_k(x_i^{(1)};\Theta)] \right\rvert+\underset{k}{\max}\left\lvert \frac{1}{n_2}\sum_{i=1}^{n_2}\psi_k(x_i^{(2)};\Theta)-\mathbb{E}[\psi_k(x_i^{(2)};\Theta)] \right\rvert.
    \end{align}
    Here, the expectations are taken over the training data $\mathcal{D}$.

    Note that the responsibility can be represented using the softmax function as:
    \begin{equation}
        \psi_k(x;\Theta)=\operatorname{softmax}(s_k(x;\Theta)),
    \end{equation}
    where $s_k(x;\Theta)=\pi_k\mathcal{N}(x;\mu_k,\Sigma_k)$ and $\mathcal{N}$ is the Gaussian density function with parameters $\mu_{k}, \Sigma_{k}, k \in [K].$
    Since $||J_{\operatorname{softmax}}||_{\textup{op}}\le\frac{1}{2}$ (where $||J_{\operatorname{softmax}}||_\textup{op}$ is the operator norm of the Jacobian of the softmax), 
    we have 
    \begin{equation}
        ||\nabla_\theta\psi_k(x;\Theta)||\le \frac{1}{2}||\nabla_\theta s_k(x;\Theta)||_F. 
    \end{equation}
    Moreover, we have for all $\Theta \in \Phi_{\xi, \zeta, \nu}$ that,
    \begin{align}
        \left\lVert \frac{\partial s_j}{\partial\pi_j} \right\rVert\le\frac{1}{\xi} := A_{\pi},
        \left\lVert \frac{\partial s_j}{\partial \mu_j} \right\rVert\le \zeta^{-1}(x_{\textup{max}} + \nu) =: A_\mu,
        \left\lVert \frac{\partial s_j}{\partial \Sigma_j} \right\rVert_F\le\frac{1}{2} \zeta^{-1}+\frac{1}{2} \zeta^{-2}(x_{\textup{max}} + \nu)^2=: A_\Sigma.
    \end{align}
    Hence, we have:
    \begin{equation}
        ||\nabla_\theta s_k(x;\Theta)||_F\le\sqrt{K(A_\pi^2+A_\mu^2+A_\Sigma^2)}=:C_s.
    \end{equation}
    To sum up, $\psi_k(x;\Theta)$ is Lipschitz with respect to 
    \begin{equation}
        L:=\underset{x, \Theta \in \Phi_{\xi,\zeta,\nu}}{\sup}||\nabla_\theta \psi_k(x;\Theta)||\le\frac{1}{2}C_s,
    \end{equation}
    where $C_{s}$ is a constant only depending on $\xi, \zeta, \nu.$
    Now from \cref{lem:rademacher,lem:Rad_ub}, the following inequality holds for all $k$ with probability at least $1-\delta$: 
    \begin{equation}
        \underset{\Theta \in \Phi_{\xi,\zeta,\nu}}{\sup}\left\lvert \frac{1}{n_1}\sum_{i=1}^{n_1}\psi_k(x_i^{(1)};\Theta)-\mathbb{E}[\psi_k(x_i^{(1)};\Theta)] \right\rvert
        \le 2C'L\sqrt{\frac{d}{n_1}}+4\sqrt{\frac{2\log(4/\delta)}{n_1}},
    \end{equation}
    where $C'$ is a constant depending on $\xi, \zeta, \nu.$
    Therefore, we can conclude that, with probability at least $1-\delta$:
    \begin{equation}
       \sup_{\Theta \in \Phi_{\xi,\zeta,\nu}} |\Delta(\Theta;\mathcal{D}_n)-\Delta(\Theta)|
        \le 4C\sqrt{\frac{d}{n'}}+8\sqrt{\frac{2\log(8/\delta)}{n'}}, 
    \end{equation}
    where $n'=\min\{n_1,n_2\}$ and $C=C'L.$
\end{proof}

\section{Small Disparity Between Soft and Hard Assignments}

As previously discussed, one can obtain the hard‐assignment map $\psi_{hard}(x_i;\Theta)$ for a given $x_{i}$ from the soft assignment map:
$\psi_{hard}(x_i;\Theta)=\argmax_{k \in [K]} \{ \psi_{k}(x_i; \Theta) \}.$
\cref{rmk:hard_soft} below shows that, under a mild separation condition in high dimension, 
the soft assignment map is close to the corresponding hard assignment map.

\begin{remark}\label{rmk:hard_soft}
    Recall that for an isotropic Gaussian mixture model,
    $\psi_k(x;\Theta) \propto \exp \bigl(-\|x-\mu_k\|^2/(2\sigma^2)\bigr).$
    Let 
    $k_1=\arg\min_k\|x-\mu_k\|$
    be the cluster index of the closest cluster center,
    and let
    $k_2=\arg\min_{k\neq k_1}\|x-\mu_k\|$
    be the cluster index of the second-closest cluster center.
    Let $D = \|x-\mu_{k_2}\|^2-\|x-\mu_{k_1}\|^2>0.$
    Then for each $x,$ we have
    \begin{equation}
        \psi_{k_1}(x;\Theta)
        = \frac{\exp(-\|x-\mu_{k_1}\|^2/(2\sigma^2))}
        {\sum_{l=1}^K\exp(-\|x-\mu_{l}\|^2/(2\sigma^2))}
        \ge \frac{1}{1+(K-1)\exp(-D/(2\sigma^2))}\,. 
    \end{equation}
    Hence, as $d\to\infty$ or $D \gg 2\sigma^2\log(K-1)$, we have
    $\exp(-D/(2\sigma^2))\to0\,$ which implies 
    $\psi_{k_1}(x;\Theta)\to1\,. $

    In other words, in high‐dimensional or well‐separated regimes, the soft assignment map $\psi_{k}(x_i; \Theta)$ becomes very close to the hard assignment map $\psi_{hard}(x_i;\Theta),$ and thus not much loss occurs when using the hard assignment map instead of the soft assignment map.
\end{remark}
\section{Detailed Explanations on EM Algorithm}
\citet{wu1983convergence} showed that the EM algorithm converges to a stationary point of the log-likelihood function \(\ell(\Theta \mid \mathcal{D})\) under mild regularity conditions.
That is, iterating the EM algorithm yields parameter estimates that locally maximize \(\ell(\Theta \mid \mathcal{D})\).
Let \(Y = (\mathcal{D}, Z)\) be a random vector that follows a joint distribution of \(\mathcal{D}\) and \(Z\).
Then, instead of maximizing the log-likelihood directly, the EM algorithm tries to find parameters that maximizes the complete-data log-likelihood \(\ell_{comp}(\Theta \mid Y) = \log f(Y \mid \Theta)\).
To be more specific, at \(t\)th iteration, \(Q\) function is calculated in E-step, and parameters are updated to maximize the \(Q\) function in M-step, where the $Q$ function is defined as
\begin{align}\label{Q_func}
    Q\bigl(\Theta \mid \Theta^{(t)}\bigr) = \mathbb{E}_{Z \mid \mathcal{D}; \Theta^{{[t]}}} \ell_{comp}(\Theta \mid Y),
\end{align}
where the complete-data log-likelihood $\ell_{comp}(\Theta|Y)$ is defined as:
\begin{equation}
    \ell_{comp}(\Theta \mid Y) = \sum_{i=1}^N \sum_{k=1}^K \mathbb{I}(Z_{i} = k) \left\{ \log \pi_{k} + \log f(x_i; \theta_k) \right\}.
\end{equation}
In E-step, we calculate the responsibility as
$$
\psi_k(x_i;\Theta)
= P\bigl(Z_i = k \mid x_i; \Theta^{[t]}\bigr)
= \frac{\pi_k^{[t]}\,f(x_i;\theta_k^{[t]})}
       {\sum_{l=1}^K \pi_l^{[t]}\,f(x_i;\theta_l^{[t]})}, k \in [K],
$$
so that the $Q$ function is calculated as
\begin{equation}\label{eq:emm_q_function}
    Q\bigl(\Theta \mid \Theta^{[t]}\bigr) = \sum_{i=1}^N \sum_{k=1}^K \psi_k(x_i;\Theta^{[t]}) \bigl[\log\pi_k + \log f(x_i;\theta_k)\bigr].
\end{equation}
Then in M-step, we find $\Theta^{[t+1]}$ that maximizes the $Q$ function:
$$
\Theta^{[t+1]} = \argmax_{\Theta} Q\bigl(\Theta \mid \Theta^{[t]}\bigr)
= \argmax_{\Theta} \mathbb{E}_{Z \mid \mathcal{D};\,\Theta^{[t]}}\bigl[\ell_{{comp}}(\Theta \mid Y)\bigr]
= \argmax_{\Theta} \sum_{i=1}^N \sum_{k=1}^K
  \psi_k(x_i;\Theta^{[t]}) \bigl[\log\pi_k + \log f(x_i;\theta_k)\bigr].
$$

\clearpage
\section{FMC-EM for Mini-Batch Learning}

We here describe the algorithm for FMC-EM for mini-batch learning with sub-sampled $\Delta$ (i.e., applying the mini-batch GEM to FMC).
\cref{alg:fairEM_minibatch} below displays the overall algorithm of FMC-EM.
Define $ \ell_{comp, i}(\Theta \mid Y) = \sum_{k=1}^{K} \mathbb{I}(z_{i} = k) \{ \log \pi_{k} + \log f(x_{i}; \theta_{k}) \} $
and
$
Q_{fair, i}(\Theta \mid \Theta^{[0]} ; \lambda) 
= \sum_{k=1}^K \psi_k(x_i;\Theta^{[0]}) \bigl[\log\pi_k + \log f(x_i;\theta_k)\bigr]
- \frac{\lambda}{N} \Delta(\Theta; \mathcal{D}_{n}) $ for $i \in [N].
$

\begin{algorithm}[h]
    \caption{FMC-EM (Expectation-Maximization) for mini-batch learning with sub-sampled $\Delta$ \\
    In practice, we set \((T, R, \gamma) = (200, 10, 10^{-2})\)}\label{alg:fairEM_minibatch}
    \begin{algorithmic}
        \STATE \textbf{Input:} Dataset \(\mathcal{D} = \{x_1,\dots,x_N\}\), Sub-sampling size $n \le N,$ Number of clusters \(K\), Lagrange multiplier \(\lambda\), Maximum numbers of iterations \(T, R\) and Learning rate \(\gamma\).
        \STATE \textbf{Initialize:} \(\Theta^{[0]} = (\boldsymbol{\eta}^{[0]}, \boldsymbol{\theta}^{[0]})\)
        \STATE \textbf{Initialize:} The sub-sampled data $\mathcal{D}_{n} \subseteq \mathcal{D}$
        \STATE \textbf{Calculate:} $ A_{i}^{0}(\Theta) = Q_{fair, i}(\Theta \mid \Theta^{[0]} ; \lambda)$ for all $i \in [N]$
        \WHILE{$\sum_{i=1}^{N} A_{i}(\Theta)$ has not converged and $t < T$}
        \STATE Randomly select a mini-batch (index set) $I_{t} \subset [N]$
        \STATE Compute $Q_{fair, i}(\Theta \mid \Theta^{[t]} ; \lambda)$ for $i \in I_{t}$ and $\Delta(\Theta; \mathcal{D}_{n})$
        \STATE Set
        \begin{equation*}
            A_{i}^{t}(\Theta) =
            \begin{cases}
                Q_{fair, i}(\Theta \mid \Theta^{[t]} ; \lambda) & \textup{ if } i \in I_{t}
                \\ 
                A_{i}^{t-1}(\Theta) & \textup{ otherwise }
            \end{cases}
        \end{equation*}
            \WHILE{$r < R$}
                \STATE $\boldsymbol{\theta}_{(r+1)}^{[t]} \gets \boldsymbol{\theta}_{(r)}^{[t]} + \gamma\frac{\partial A_{i}^{t}(\Theta)}{\partial \boldsymbol{\theta}}$
                \STATE $\boldsymbol{\eta}_{(r+1)}^{[t]} \gets \boldsymbol{\eta}_{(r)}^{[t]} + \gamma\frac{\partial A_{i}^{t}(\Theta)}{\partial \boldsymbol{\eta}}$
                \STATE \(\boldsymbol{\pi}_{(r+1)}^{[t]} \gets \text{softmax } (\boldsymbol{\eta}_{(r+1)}^{[t]}) \)
                \STATE $r \gets r+1$
            \ENDWHILE
            \STATE \(\boldsymbol{\theta}^{[t+1]} \gets \boldsymbol{\theta}^{[t]}_{(R)}\)
            \STATE \(\boldsymbol{\pi}^{[t+1]} \gets \boldsymbol{\pi}^{[t]}_{(R)}\)
            \STATE $t \gets t+1$
        \ENDWHILE
    \end{algorithmic}
\end{algorithm}

\section{Theoretical Property of FMC-EM for Mini-Batch Learning With Gaussian Mixture Model}

Following Theorem 1 of \citet{karimi:hal-02334485}, which shows the theoretical convergence of mini-batch EM,
we can derive a similar result for our FMC-EM for mini-batch learning.
The following corollary shows that, the `mini-batch learning with sub-sampled $\Delta$' for FMC-EM
with the Gaussian mixture model is theoretically valid in the sense that the log-likelihood of learned parameters using mini-batches converges almost surely to a stationary point.
In the following corollary, we abbreviate $\Phi_{\xi,\zeta,\nu}$ by $\Phi$ for notational simplicity. 

\begin{corollary}\label{cor:converge}
    Consider the Gaussian mixture for $\mathbf{f}(\cdot; \Theta)$ with $\Theta\in\Phi$. 
    Let $(\Theta^{[t]})_{t \ge 1}$ be a sequence initialized by $\Theta^{[0]}$ following the iterative process in \cref{alg:fairEM_minibatch}. 
    For any $\Theta,$ let $ \bar A^t(\Theta) = \sum_{i=1}^N \Lambda_i^t(\Theta):= -\sum_{i=1}^N A_i^t(\Theta).$ 
    Denote the gradient step of the M-step as follows: 
    \begin{equation}\label{eq:grad_step}
        \Theta^{[t]}=\Pi_{\Phi}(\Theta^{[t-1]}-\gamma\nabla\bar{A}^t(\Theta^{[t-1]})), 
    \end{equation}
    where $\Pi_{\Phi}$ is a projection operator to the parameter space $\Phi$ and $\gamma>0$ is the learning rate. 
    For $0<\gamma<\frac{2\zeta}{N}$, the followings hold: 
    \begin{enumerate}
        \item[(i)] $( \ell(\Theta^{[t]} \mid \mathcal{D}) )_{t \ge 1}$ converges almost surely.
        \item[(ii)] $(\Theta^{[t]})_{t \ge 1}$ satisfies the Asymptotic Stationary Point Condition, i.e.,
        \begin{equation}
            \underset{t\to\infty}{\limsup}\underset{\Theta\in\Phi}{\sup}\frac{\langle \nabla\ell(\Theta^{[t]}\mid \mathcal{D}), \Theta-\Theta^{[t]} \rangle}{||\Theta-\Theta^{[t]}||}\le 0, 
        \end{equation}
        That is, $\Theta^{[t]}$ asymptotically converges to a stationary point. 
    \end{enumerate}
\end{corollary}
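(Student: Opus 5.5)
The plan is to cast the mini-batch procedure in \cref{alg:fairEM_minibatch} as an incremental majorization--minimization (MISO-type) scheme. Then the convergence argument of Theorem 1 of \citet{karimi:hal-02334485} can be reused almost verbatim. The objective to minimize is the penalized negative log-likelihood
\[
\mathcal{L}(\Theta) := -\ell(\Theta\mid\mathcal{D}) + \lambda\Delta(\Theta;\mathcal{D}_n) = \sum_{i=1}^N \mathcal{L}_i(\Theta),
\]
with $\mathcal{L}_i(\Theta) := -\log \mathbf{f}(x_i;\Theta) + \frac{\lambda}{N}\Delta(\Theta;\mathcal{D}_n)$. Note that the statement speaks of $\ell$; I will prove (i)--(ii) for $\mathcal{L}$, which reduces to $\ell$ when $\lambda=0$, and read the statement in that sense.

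\textbf{Step 1: surrogate property.} Each $\Lambda_i^t = -A_i^t$ is, by construction, $-Q_{fair,i}(\cdot\mid\Theta^{[\tau_i(t)]};\lambda)$. Here $\tau_i(t)$ is the last iteration at which index $i$ was in a mini-batch, and I include the entropy term, which is constant in $\Theta$. By Jensen's inequality (the usual EM argument), this is a majorizer of $\mathcal{L}_i$ that is tight at $\Theta^{[\tau_i(t)]}$. The difference $\Lambda_i^t - \mathcal{L}_i$ is the KL divergence between posteriors, so it is smooth and has zero gradient at the anchor point. Since $\Delta(\Theta;\mathcal{D}_n)$ appears identically in both $\Lambda_i$ and $\mathcal{L}_i$, it cancels in the surrogate gap. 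Consequently the nonsmooth $\max_k|\cdot|$ does not enter the error term $e_i$ used by \citet{karimi:hal-02334485}.

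\textbf{Step 2: regularity on $\Phi$.} On the compact set $\Phi$, I will verify the conditions of their theorem. First, $\mathcal{L}$ is bounded below; this holds because eigenvalues of $\Sigma_k$ are bounded below by $\zeta$, so the Gaussian densities are bounded. Second, the smooth part of each $\Lambda_i$, namely $-\sum_k \psi_k(x_i;\Theta^{[\tau]})[\log\pi_k+\log f(x_i;\theta_k)]$, has an $L$-Lipschitz gradient. Its Hessian in $\mu_k$ is $\Sigma_k^{-1}$ weighted by responsibilities, so its norm is at most $1/\zeta$. The $\pi$ and $\Sigma$ blocks are bounded similarly using $\xi$, $\nu$, and $x_{\max}$, exactly as the constants $A_\pi,A_\mu,A_\Sigma$ were obtained in the proof of \cref{prop:generalization}. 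Summing over $N$ terms gives a Lipschitz constant of order $N/\zeta$, and this is what justifies the step-size condition $\gamma<2\zeta/N$. With this step size, the projected step in \cref{eq:grad_step} satisfies the descent lemma $\bar A^t(\Theta^{[t]}) \le \bar A^t(\Theta^{[t-1]})$. Hence it is a valid generalized M-step, the GEM analogue of exact minimization. For the penalty term I will either restrict to a region where the maximizing $k$ is fixed, as the algorithm does by freezing the argmax, or treat $\Delta$ as a Lipschitz term common to all surrogates. In either case it needs no smoothness because of the cancellation in Step 1.

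\textbf{Step 3: the supermartingale argument.} Using the descent from Step 2 and the majorization from Step 1, I will show that $\bar A^{t+1}(\Theta^{[t+1]}) \le \bar A^{t}(\Theta^{[t]}) - \sum_{i\in I_{t+1}}\bigl(\Lambda_i^t(\Theta^{[t]})-\mathcal{L}_i(\Theta^{[t]})\bigr)$ in conditional expectation. By the Robbins--Siegmund theorem, $\bar A^t(\Theta^{[t]})$ then converges almost surely, and $\sum_t \mathbb{E}[\bar A^t(\Theta^{[t]})-\mathcal{L}(\Theta^{[t]})]<\infty$. The gap therefore vanishes, which yields (i). For (ii), the first-order optimality of the projected step gives that $\langle\nabla\bar A^t(\Theta^{[t]}),\Theta-\Theta^{[t]}\rangle/\|\Theta-\Theta^{[t]}\|$ is asymptotically nonnegative. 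Since the surrogate gaps are $L$-smooth, vanish, and are nonnegative, their gradients vanish as well, because $\|\nabla e\|^2\le 2L\,e$. So $\nabla\bar A^t(\Theta^{[t]})-\nabla\mathcal{L}(\Theta^{[t]})\to0$, which gives the asymptotic stationary point condition.

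The main obstacle I expect is Step 2's handling of $\Delta$ and of the projection. The descent lemma and the first-order condition need care at points where the argmax index in $\Delta$ switches. I would first try to work with the fixed-$k$ piece (as the algorithm does), applying the argument to a piecewise-smooth objective, and fall back to Clarke directional derivatives if needed.
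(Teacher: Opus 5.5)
Your route is the same as the paper's. Both recast the mini-batch FMC-EM as a MISO-type incremental majorization--minimization scheme and replay Theorem~1 of Karimi et al.: majorization and tightness of each $\Lambda_i$, projected descent with $L\approx N/\zeta$, Robbins--Siegmund on $\bar A^t(\Theta^{[t]})$, and then $\|\nabla\bar h^t\|^2\le 2L\,\bar h^t$ together with the projection inequality for (ii).

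Your two refinements are correct, and they repair slips in the paper's write-up. The paper takes $\iota=-\ell$ and uses $Q_{fair,i}$ without the entropy term. Its claims that $\Omega_i$ is tight at the anchor and that $\bar A^t(\Theta^{[t]})-\iota(\Theta^{[t]})\to 0$ therefore hold only once, as in your Step~1, $\iota$ absorbs $\lambda\Delta(\cdot;\mathcal{D}_n)$ and the entropy (constant in $\Theta$) is added back. So the argument really proves convergence and asymptotic stationarity for the penalized objective $-\ell+\lambda\Delta(\cdot;\mathcal{D}_n)$. For $\lambda>0$, item (ii) with $\nabla\ell$ itself fails in general: at an interior penalized stationary point, $\nabla\ell$ is balanced by $\lambda$ times a subgradient of $\Delta$ rather than being zero. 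Your reading of the corollary is the right one.

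The step that does not hold up is your claim in Step~2 that $\Delta$ ``needs no smoothness because of the cancellation in Step~1.'' The paper is equally silent here; it simply asserts that $\bar A^t$ is $N/\zeta$-smooth.
\begin{itemize}
\item The cancellation removes $\Delta$ only from the surrogate \emph{gap} $\bar A^t-\mathcal{L}$. That is all the $\|\nabla e\|^2\le 2Le$ step needs.
\item The projected step, however, is taken on $\bar A^t$ itself, which contains $\lambda\max_k|g_k(\Theta)|$, where $g_k$ is the signed $k$th group gap on $\mathcal{D}_n$.
\item Two steps need $\nabla\bar A^t$ to be Lipschitz: the descent inequality for $\bar A^t$ along the projected step, and, in (ii), the bound $|\langle\nabla\bar A^t(\Theta^{[t]})-\nabla\bar A^t(\Theta^{[t-1]}),\Theta-\Theta^{[t]}\rangle|\le L\|\Theta^{[t]}-\Theta^{[t-1]}\|\,\|\Theta-\Theta^{[t]}\|$.
\end{itemize}
Lipschitz continuity of $\nabla\bar A^t$ fails where the maximizing $k$ switches. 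Even with $k$ frozen, it fails on $\{g_k=0\}$, which is exactly the set of fair configurations the penalty drives toward. There, a fixed-step gradient step on $\lambda|g_k|$ can overshoot and increase the objective. Restricting to a fixed-$k$ region is not available, since nothing keeps the iterates inside one. A Clarke-derivative analysis would need diminishing steps or a different update, which means a different algorithm from the one in the statement.

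The step-size threshold is also off. Even on a smooth piece, the penalty adds $\lambda L_g$ to the Lipschitz constant. The $\pi$- and $\Sigma$-blocks of the Hessian are controlled by $\xi^{-2}$, $\zeta^{-2}$ and $(x_{\max}+\nu)^2\zeta^{-3}$, not by $1/\zeta$. So $\gamma<2\zeta/N$ is not the right condition.

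To close the gap, replace $\Delta$ by a smooth surrogate shared by all $\Lambda_i$, for example $\sum_k g_k^2$ or a log-sum-exp smoothing of $\max_k g_k^2$. The appendix gradients already differentiate $g_{\tilde k}^2$. Then take $\gamma<2/L$, with $L$ the full smoothness constant of $\bar A^t$ on $\Phi$. With that change, the rest of your argument goes through as written.
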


\begin{proof}
    We prove (i) and (ii) sequentially as follows.
    
    \textbf{[Proof of (i)]}    
    First, we note that all regularity conditions required for mini-batch EM hold for Gaussian densities (see \citet{karimi:hal-02334485} for the detailed regularity conditions; we omit the conditions in this proof for brevity).
    Furthermore, as the Gap $\Delta(\Theta; \mathcal{D}_{n})$ is computed on a fixed $\mathcal{D}_{n},$ we only need to focus on the $\ell_{comp, i}$ term in $Q_{fair, i}.$
    Let $\Omega_{i} = -Q_{fair, i}$ and $\iota(\Theta) = -\ell (\Theta \mid \mathcal{D}).$
    Then, we can apply the same arguments in the proof of Theorem 1 of \citet{karimi:hal-02334485}, as below. 
    From the smoothness regularity condition, we have that
    \begin{equation}
        \bar{A}^t(\Theta^{t})
        \le\bar{A}^t(\Theta^{[t-1]})+\langle \nabla\bar{A}^t(\Theta^{[t-1]}),\Theta^{[t]}-\Theta^{[t-1]} \rangle+\frac{L}{2}||\Theta^{[t]}-\Theta^{[t-1]}||^2,  
    \end{equation}
    where $L:=N/\zeta$. 
    Due to the definition of the projection operator, we have that
    \begin{equation}
        \langle \Theta^{[t]}-(\Theta^{[t-1]}-\gamma\nabla\bar{A}^t(\Theta^{[t-1]})), \Theta^{[t-1]}-\Theta^{[t]}\rangle\le 0, 
    \end{equation}
    which leads to 
    \begin{equation}\label{eq:proj_prop}
        \langle \nabla\bar{A}^t(\Theta^{[t-1]}),\Theta^{[t]}-\Theta^{[t-1]} \rangle\le -\frac{1}{\gamma}||\Theta^{[t]}-\Theta^{[t-1]}||^2.
    \end{equation}
    Hence, we have:
    \begin{equation}\label{eq:descent_prop_grad}
        \bar{A}^t(\Theta^{[t]})\le\bar{A}^t(\Theta^{[t-1]})-\nu||G^{t-1}_\gamma(\Theta^{[t-1]})||^2, 
    \end{equation}
    where $G^{t-1}_\gamma(\Theta)=\frac{1}{\gamma}(\Theta-\gamma\nabla\bar{A}^{[t-1]}(\Theta))$ and $\nu=\gamma-\frac{L}{2}\gamma^2>0$. 
    For any $t\ge1$ and $\Theta$, we have that
    \begin{equation}
       \bar A^t(\Theta)=\bar A^{t-1}(\Theta)
      +\sum_{i\in I_{t}}\Omega_{i}(\Theta \mid \Theta^{[t-1]})
      -\sum_{i\in I_{t}}\Lambda_i^{t-1}(\Theta). 
    \end{equation}
    From \cref{eq:descent_prop_grad}, we have that
    \begin{equation}
       \bar A^t(\Theta^{[t-1]})\le\bar A^{t-1}(\Theta^{[t-1]})
      +\sum_{i\in I_{t}}\Omega_{i}(\Theta^{[t-1]} \mid \Theta^{[t-1]})
      -\sum_{i\in I_{t}}\Lambda_i^{t-1}(\Theta^{[t-1]})-\nu||G_\gamma^{t-1}(\Theta^{[t-1]})||^2. 
    \end{equation}
    Since 
    $ \Omega_{i}(\Theta^{[t]} \mid \Theta^{[t-1]})=\iota_i(\Theta^{[t-1]}) $ for $i\in I_{t}$
    and
    $ \iota_i(\Theta^{[t-1]})-\Lambda_i^{t-1}(\Theta^{[t-1]}) \le 0$ for $i \in [N],$
    it follows that
    \begin{equation}
        \bar A^t(\Theta^{[t]})\le\bar A^{t-1}(\Theta^{[t-1]})
        +\sum_{i\in I_{t}}\iota_i(\Theta^{[t-1]})
        -\sum_{i\in I_{t}}\Lambda_i^{t-1}(\Theta^{[t-1]})-\nu||G_\gamma^{t-1}(\Theta^{[t-1]})||^2.
    \end{equation}
    Let $\mathcal{F}_{t-1}=\sigma(I_j:j\le k-1)$ be the filtration generated by the sampling of the indices. 
    Then, we have that
    \begin{equation}
        \mathbb{E}\left[\Lambda_{I_t}^{t-1}(\Theta^{[t-1]})-\iota_{I_t}(\Theta^{[t-1]})\mid\mathcal{F}_{t-1}\right]=\frac{p}{N}\left( \bar{A}^{t-1}(\Theta^{[t-1]})-\iota(\Theta^{[t-1]}) \right), 
    \end{equation}
    which leads to 
    \begin{equation}
        \mathbb{E}\left[\bar{A}^{t}(\Theta^{[t]})\vert\mathcal{F}_{t-1}\right]\le\bar{A}^{t-1}(\Theta^{[t-1]})-\frac{p}{N}\left( \bar{A}^{t-1}(\Theta^{[t-1]})-\iota(\Theta^{[t-1]}) \right)-\nu||G_\gamma^{t-1}(\Theta^{t-1})||^2.
    \end{equation}

    From the Robbins-Siegmund lemma \citep{robbins1971convergence}, we have that
    \begin{align}
        \underset{t\to\infty}{\lim}\bar{A}^{t}(\Theta^{[t]})-\iota(\Theta^{[t]})&=0\textup{ a.s.}\\
        \sum_{t}\nu||\nabla\bar{A}^{t}(\Theta^{[t-1]})||&<\infty\textup{ a.s.}
    \end{align}

    \textbf{[Proof of (ii)]}
    Similar to \citet{karimi:hal-02334485}, we define for all $t\ge 0$, 
    \begin{equation}
        \bar{h}^{t}:\Theta\to\sum_{i=1}^{N}\Lambda_i^{t}(\Theta)-\iota_i(\Theta).
    \end{equation}
    Note that $\bar{h}^t$ is $L$-smooth on $\Phi$. 
    Similarly, we have the following inequality using Lemma 1.2.3 in \citet{nesterov2013gradient}: 
    \begin{equation}
        ||\nabla\bar{h}^t(\Theta^{[t]})||_2^2\le 2L\bar{h}^{t}(\Theta^{[t]})\to 0.
    \end{equation}
    Then, we have that
    \begin{align}
        &\langle \nabla\iota(\Theta^{[t]}),\Theta-\Theta^{[t]} \rangle\\
        &=\langle \nabla\bar{A}^{t}(\Theta^{[t-1]})+\eta^{t},\Theta-\Theta^{[t]} \rangle
        -\langle \eta^{t},\Theta-\Theta^{[t]} \rangle
        +\langle \nabla\bar{A}^{t}(\Theta^{[t]})-\nabla\bar{A}^{t}(\Theta^{[t-1]}),\Theta-\Theta^{[t]} \rangle
        -\langle \nabla\bar{h}^{t}(\Theta^{[t]}),\Theta-\Theta^{[t]} \rangle, 
    \end{align}
    where $\eta^{t}=\frac{1}{\gamma}(\Theta^{[t]}-(\Theta^{[t-1]}-\gamma\nabla\bar{A}^t(\Theta^{[t-1]})))$. 
    From \cref{eq:grad_step}, we have that
    \begin{align}
        \langle \nabla\bar{A}^{t}(\Theta^{[t-1]})+\eta^t,\Theta-\Theta^{[t]} \rangle&\ge 0,\\
        \langle \eta^t,\Theta-\Theta^{[t]} \rangle&\le 0, 
    \end{align}
    and from the smoothness, we have that
    \begin{align}
        ||\langle \nabla\bar{A}^t(\Theta^{[t]})-\nabla\bar{A}^t(\Theta^{[t-1]}),\Theta-\Theta^{[t]} \rangle||
        &\le||\nabla\bar{A}^{t}(\Theta^{[t]})-\nabla\bar{A}^{t}(\Theta^{[t-1]})||\cdot||\Theta-\Theta^{[t]}||\\
        &\le L||\Theta^{[t]}-\Theta^{[t-1]}||\cdot||\Theta-\Theta^{[t]}||.
    \end{align}
    Therefore, we have that
    \begin{align}
        \langle \nabla\iota(\Theta^{[t]}),\Theta-\Theta^{[t]} \rangle
        &\ge -L||\Theta^{[t]}-\Theta^{[t-1]}||\cdot||\Theta-\Theta^{[t]}|| -\langle \nabla\bar{h}^{t}(\Theta^{[t]}),\Theta-\Theta^{[t]} \rangle\\
        &\ge -L||\Theta^{[t]}-\Theta^{[t-1]}||\cdot||\Theta-\Theta^{[t]}|| -||\nabla\bar{h}^t(\Theta^{[t]})||\cdot||\Theta-\Theta^{[t]}||.
    \end{align}
    Note that from the definition of the projection operator, we have that
    \begin{equation}
        ||\Theta^{[t]}-\Theta^{[t-1]}||\le\gamma||\nabla\bar{A}^{t}(\Theta^{t-1})||
    \end{equation}
    which leads to 
    \begin{equation}
        \sum_{t=1}^{\infty}||\Theta^{[t]}-\Theta^{[t-1]}||^2\le\gamma^2\sum_{t=1}^{\infty}||\nabla\bar{A}^{t}(\Theta^{[t-1]})||^2<\infty.
    \end{equation}
    Therefore, as same as \citet{karimi:hal-02334485}, we conclude that
    \begin{equation}
        \underset{t\to\infty}{\liminf}\underset{\Theta\in\Phi}{\inf}\frac{\langle \nabla\iota(\Theta^{[t]}),\Theta-\Theta^{[t]} \rangle}{||\Theta-\Theta^{[t]}||}\ge 0, 
    \end{equation}
    in other words, 
    \begin{equation}
        \underset{t\to\infty}{\limsup}\underset{\Theta\in\Phi}{\sup}\frac{\langle \nabla\ell(\Theta^{[t]}),\Theta-\Theta^{[t]} \rangle}{||\Theta-\Theta^{[t]}||}\le 0.
    \end{equation}
\end{proof}

\clearpage
\section{Derivation of the Gradients in \cref{alg:fairGD}}\label{Appd_A}

Here, we derive the gradients of $\ell(\Theta; \lambda)$ with respect to $\mu_{k}, k \in [K]$ and $\eta_{k}, k \in [K]$ in \cref{alg:fairGD}.
For simplicity, we only consider \(M=2\) and \(\Sigma = \mathbb{I}_d\) case.
Let
\begin{equation}
    q(\Theta; k) = \left\vert \frac{\sum_{x_{i} \in \mathcal{D}^{(1)}} \psi_{k}\left(x_i ; \Theta \right)}{N_{1}} - \frac{\sum_{x_j \in \mathcal{D}^{(2)}} \psi_{k} \left(x_j; \Theta \right)}{N_{2}} \right\vert
\end{equation}
and $\tilde{k} = \argmax_{k \in [K]} q(\Theta;k).$

\begin{itemize}
    \item Gradient with respect to \(\mu_k\)

    \begin{gather}
        \frac{\partial \ell(\Theta; \lambda)}{\partial \mu_k} = \frac{\partial \ell(\Theta \mid \mathcal{D})}{\partial \mu_k} - \lambda \frac{\partial 
        \Delta(\Theta)}{\partial \mu_k} \\
        \frac{\ell (\Theta \mid \mathcal{D})}{\partial \mu_k} = \frac{\partial}{\partial \mu_k} \sum_{i=1}^N \log \left\{ \sum_{l=1}^K \pi_l f (x_i ; \theta_l) \right\} = \sum_{i=1}^N \psi_{ik} (x_i - \mu_k) \\
        \frac{\partial \Delta(\Theta)}{\partial \mu_k} = 2 \left( \frac{1}{N_1} \sum_{i=1}^{N_1} \psi_{i }^{(1)} - \frac{1}{N_2}\sum_{j=1}^{N_2} \psi_{j}^{(2)}\right) \times \left(\frac{1}{N_1} \sum_{i=1}^{N_1}\phi_{ik\tilde{k}}^{(1)} - \frac{1}{N_2} \sum_{j=1}^{N_2} \phi_{jk\tilde{k}}^{(2)}\right)
    \end{gather}
    
    where 
    \begin{align}
        \psi_{ik}^{(s)} & := \psi_k(x_i^{(s)}; \Theta) = \frac{\pi_k f (x_i^{(s)}; \theta_k)}{\sum_{l=1}^K \pi_l f(x_i^{(s)}; \theta_l)} \\
        \phi_{ik \tilde{k}}^{(s)} & = \psi_{i \tilde{k}}^{(s)} (x_i^{(s)} - \mu_{\tilde{k}}) - \psi_{ik}^{(s)} (x_i^{(s)} - \mu_k)
    \end{align}
    Hence, 
    \begin{align}
        \frac{\partial \ell(\Theta; \lambda)}{\partial \mu_k} = \sum_{i=1}^N \psi_{ik}(x_i - \mu_k) - 2\lambda \left(\frac{1}{N_1}\sum_{i=1}^{N_1}\psi_{i \tilde{k}}^{(1)} - 
        \frac{1}{N_2} \sum_{j=1}^{N_2}\psi_{j \tilde{k}}^{(2)}\right) 
        \times
        \left(\frac{1}{N_1} \sum_{i=1}^{N_1} \phi_{ik \tilde{k}}^{(1)} - \frac{1}{N_2} \sum_{j=1}^{N_2} \phi_{jk \tilde{k}}^{(2)} \right)
    \end{align}

\end{itemize}

\begin{itemize}
    \item Gradient with respect to \(\eta_k\)

    \begin{gather}
        \frac{\partial \ell(\Theta; \lambda)}{\partial \eta_k} = \frac{\partial \ell(\Theta \mid \mathcal{D})}{\partial \eta_k} - \lambda \frac{\partial \Delta(\Theta)}{\partial \eta_k}
    \end{gather}
    \begin{align}
        \frac{\ell (\Theta \mid \mathcal{D})}{\partial \eta_k} = \frac{\partial}{\partial \eta_k} \sum_{i=1}^N \log \left\{ \sum_{l=1}^K \pi_l f (x_i ; \theta_l) \right\}
        = \sum_{i=1}^N \sum_{l=1}^K \psi_{il} (\delta_{lk} - \pi_k) = \sum_{i=1}^N (\psi_{ik} - \pi_k)
    \end{align}
    \begin{align}
        \frac{\partial \Delta(\Theta)}{\partial \eta_k} = 2 \left( \frac{1}{N_1} \sum_{i=1}^{N_1} \psi_{i \tilde{k}}^{(1)} - \frac{1}{N_2}\sum_{j=1}^{N_2} \psi_{j \tilde{k}}^{(2)}\right) \times
        \left(\frac{1}{N_1} \left(\delta_{k \tilde{k}} - \psi_{ik}^{(1)}\right) \psi_{i \tilde{k}}^{(1)} - \frac{1}{N_2} \left(\delta_{k \tilde{k}} - \psi_{jk}^{(2)}\right) \psi_{j \tilde{k}}^{(2)} \right)
    \end{align}
    where
    \begin{align}
         \\
        \delta_{k \tilde{k}} & = 
        \begin{cases}
            1 \quad \text{if } k = \tilde{k}\\
            0 \quad \text{if } k \neq \tilde{k}
        \end{cases}
    \end{align}
    Hence,
    \begin{align}
        \frac{\partial \ell(\Theta; \lambda)}{\partial \eta_k} & = \sum_{i=1}^N (\psi_{ik} -\pi_k) 
        \\
        & - 2 \lambda \left(\frac{1}{N_1} \sum_{i=1}^{N_1} \psi_{i \tilde{k}}^{(1)} - \frac{1}{N_2} \sum_{j=1}^{N_2} \psi_{j \tilde{k}}^{(2)} \right) \times
        \left(\frac{1}{N_1} \sum_{i=1}^{N_1} \left(\delta_{k \tilde{k}} - \psi_{ik}^{(1)}\right)\psi_{i \tilde{k}}^{(1)} - \sum_{j=1}^{N_2} \left(\delta_{k \tilde{k}} - \psi_{jk}^{(2)}\right)\psi_{j \tilde{k}}^{(2)}\right).
    \end{align}

\end{itemize}

\clearpage
\section{Implementation Details}

\subsection{Datasets}

\begin{itemize}
    \item \(\textbf{Adult dataset}\): Adult dataset\footnote{\url{http://archive.ics.uci.edu/dataset/2/adult}} is based on US Census Bureau's 1994 dataset.
    The dataset consists of 32,561 samples, with 5 continuous variables (age, fnlwgt, education level, capital-gain, hours-per-week) and 7 categorical variables (workclass, education, occupation, relationship, race, sex, native country). 
    The categorical variables have 9, 16, 7, 15, 6, 5, and 42 unique values, respectively.
    The sensitive attribute is gender (male: 21,790, female: 10,771), resulting in a maximum dataset Balance of 0.4943 (ratio = [0.6692, 0.3308]).
    
    \item \(\textbf{Bank dataset}\): Bank dataset\footnote{\url{https://archive.ics.uci.edu/ml/datasets/Bank+Marketing}} is provided by a Portuguese banking institution and contains 41,008 samples with 21 features. 
    We use 6 continuous variables (age, duration, euribor3m, nr.employed, cons.price.idx, campaign) and 10 categorical features (job, education, default, housing, loan, contact, month, day of week, poutcome, term deposit). 
    The categorical variables have 12, 8, 3, 3, 3, 2, 10, 5, 3, and 2 unique values, respectively. The sensitive attribute is marital status, which has three categories ($M=3$): married, divorced, and single.
    For the analysis with $M=2,$ we combine divorced and single statuses into an "unmarried" category. The ratio of the three sensitive attributes is [0.6064, 0.1122, 0.2814] , and the transformed ratio for the two sensitive attributes is [0.6064, 0.3936].
    Hence, the maximum Balance of Bank dataset is 0.6491 for $M=2$ or 0.1850 for $M=3.$

    \item \(\textbf{Credit dataset}\): Credit dataset\footnote{\url{https://archive.ics.uci.edu/ml/datasets/default+of+credit+card+clients}} is provided by the Department of Information Management, Chung Hua University in Taiwan, and consists of 30,000 samples with 24 features.
    We use 4 continuous variables (LIMIT\_BAL, AGE, BILL\_AMT1, PAY\_AMT1) and 8 categorical variables (SEX, MARRIAGE, PAY\_0, PAY\_2, PAY\_3, PAY\_4, PAY\_5, PAY\_6).
    The categorical variables consist of 2, 4, 11, 11, 11, 11, 10, and 10 unique values for each.
    The sensitive attribute is EDUCATION, which consists of seven categories.
    We transform seven categories into two ($M=2$) or three ($M=3$) status, resulting in [0.4677, 0.5323] or [0.4677, 0.3528, 0.1795].
    The maximum Balance of each case is 0.8785 or 0.3838.

    \item \(\textbf{Census dataset}\): Census dataset\footnote{\url{https://archive.ics.uci.edu/dataset/116/us+census+data+1990}} is a sub-dataset of the 1990 US Census, consisting of 2,458,285 samples with 68 attributes.
    We use 25 continuous variable, similar to the approach taken by \citep{backurs2019scalable} and \citep{ziko2021variational}. 
    The sensitive attribute is gender, with a male/female ratio of [0.4847, 0.5153].
    The dataset has the maximum Balance of 0.9407.
\end{itemize}

\subsection{Multinoulli Mixture Model for Categorical Data}
Denote $d_{\textup{cate}}$ as the number of categorical features.
Let $l_j$ for $j\in[d_{\textup{cate}}]$ be the number of categories in $j$-th feature. 
For notational simplicity, we let $\{1,\ldots,l_j\}$ be the categories of $j$-th feature $x_{ij}$. 
For each components, parameters are represented as $\theta_k=(\phi_{k,1},\ldots,\phi_{k,d_{\textup{cate}}})$. 
Here, $\phi_{k,j}=(p_{k,j,1},\ldots,p_{k,j,l_j})\in[0,1]^{l_j}$ is the vector of the probability masses of categorical distributions on $(l_j-1)$-dimensional simplex, for $k\in[K]$. 
Then, the mixture density for categorical data can be formulated as: 
\begin{equation}\label{eq:multinoulli_mixture}
    \mathbf{f} (x_i \mid \Theta)
    = \sum_{k=1}^K \pi_k \prod_{j=1}^{d_{\textup{cate}}} \textup{Cat}\bigl(x_{ij};\phi_{k,j}\bigr)
    = \sum_{k=1}^K \pi_k \prod_{j=1}^{d_{\textup{cate}}} \prod_{c=1}^{l_j} p_{k,j,c}^{\mathbb{I}(x_{ij}=c)},
\end{equation}
which yields the complete-data log-likelihood of the form:
\begin{equation}\label{eq:multinoulli_mixture_lcomp}
    \ell_{{comp}}(\Theta)
    = \sum_{i=1}^N \sum_{k=1}^K \mathbb{I}(Z_{i} = k) \Bigl[\log\pi_k + \sum_{j=1}^{d_{\textup{cate}}}\sum_{c=1}^{l_j} \mathbb{I} (x_{ij}=c) \cdot \log p_{k,j,c}\Bigr].
\end{equation}
We randomly initialize all parameters $p_{k,j,c}, c \in [l_{j}], j \in [d_{\textup{cate}}], k \in [K].$
That is, we first sample $\tilde{p}_{k,j,c}, c \in [l_{j}], j \in [d_{\textup{cate}}], k \in [K]$ from $\textup{Unif}(0, 1),$ 
then re-parametrize $\tilde{p}$ via softmax function to satisfy $\sum_{c=1}^{l_{j}} p_{k, j, c} = 1$ for all $k \in [K]$ and $j \in [d_{\textup{cate}}].$

\subsection{Hyper-Parameters}

For FMC-EM, we tune the hyper-parameters over the grids $T\in\{100,200,400\}$, $R\in\{5,10,20\}$, and $\gamma\in\{10^{-3},10^{-2},10^{-1}\}$.  
Among these values, we use $(T,R,\gamma)=(200,10,10^{-2})$ for all cases in our experiments, since we confirm that $T=200$ with $R=10$ ensures stable convergence, and $\gamma=10^{-2}$ provides stable yet sufficiently fast parameter updates.

\subsection{Computing Resources}

For all the experiments, we use several Intel(R) Xeon(R) Silver 4410Y CPU cores as well as RTX 3090 GPU processors.

\clearpage
\section{Omitted Experimental Results}

\subsection{Comparing FMC-EM and FMC-GD}

\cref{fig:10_seeds_remaining} below shows the remaining results of Pareto front plots between $\{\Delta$, Balance$\}$ and Cost with standard deviation bands, on Adult, Bank and Credit datasets.
Similar to \cref{fig:10_seeds_del_bal}, it shows that FMC-EM outperforms FMC-GD (in terms of cost-fairness trade-off) with smaller variances.

\begin{figure*}[h]
    \centering
    \includegraphics[width=0.24\linewidth]{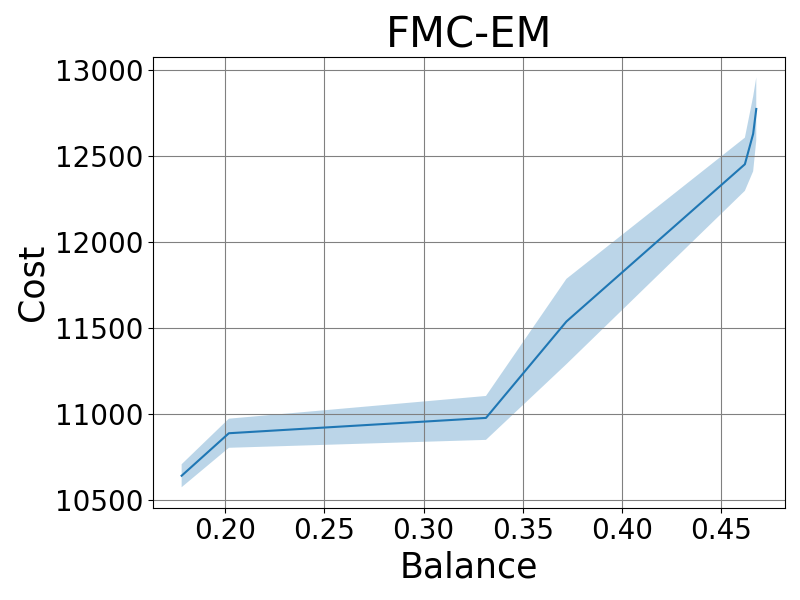}
    \includegraphics[width=0.24\linewidth]{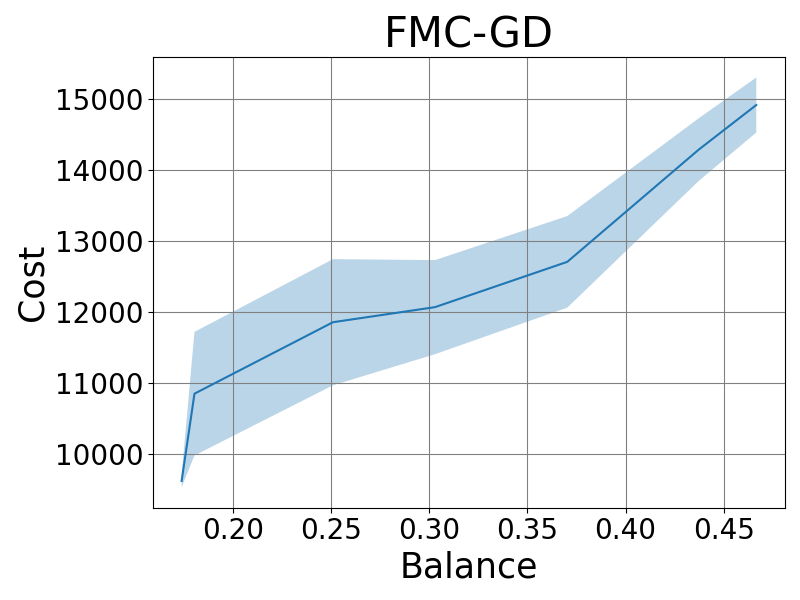}
    \\
    \includegraphics[width=0.24\linewidth]{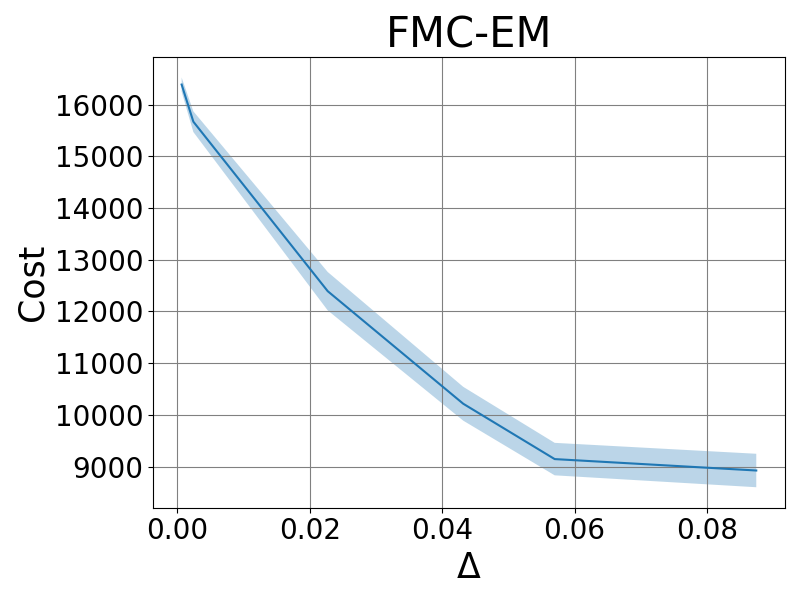}
    \includegraphics[width=0.24\linewidth]{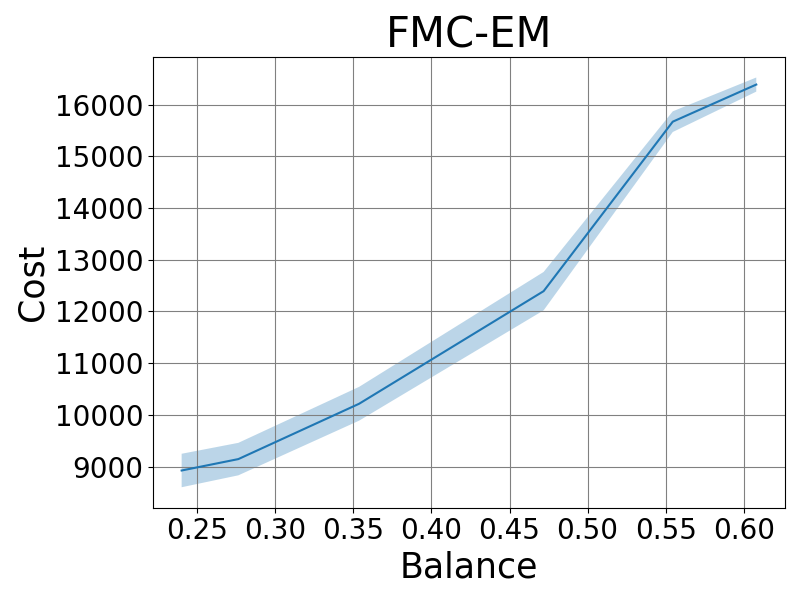}
    \includegraphics[width=0.24\linewidth]{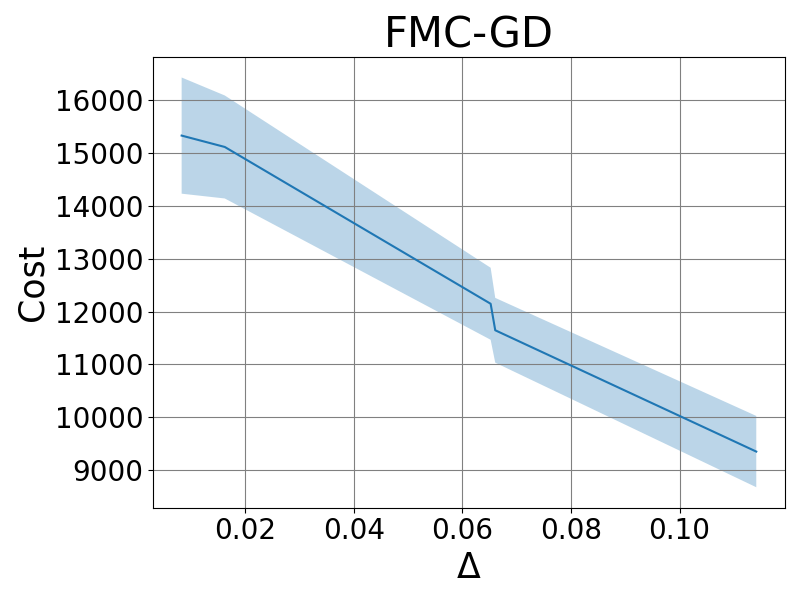}
    \includegraphics[width=0.24\linewidth]{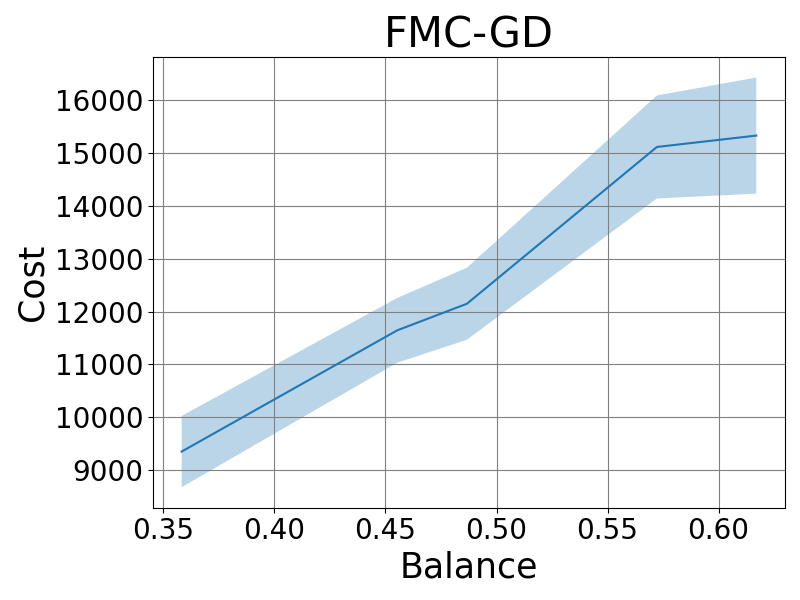}
    \\
    \includegraphics[width=0.24\linewidth]{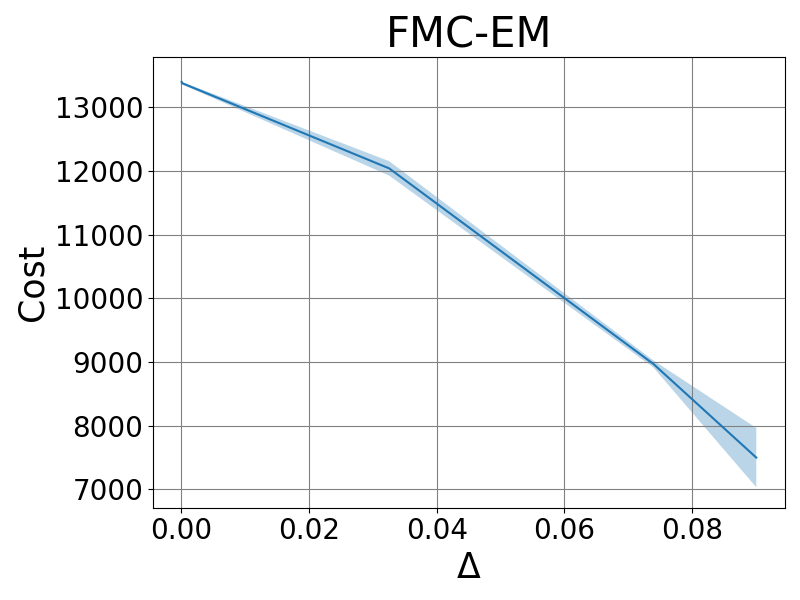}
    \includegraphics[width=0.24\linewidth]{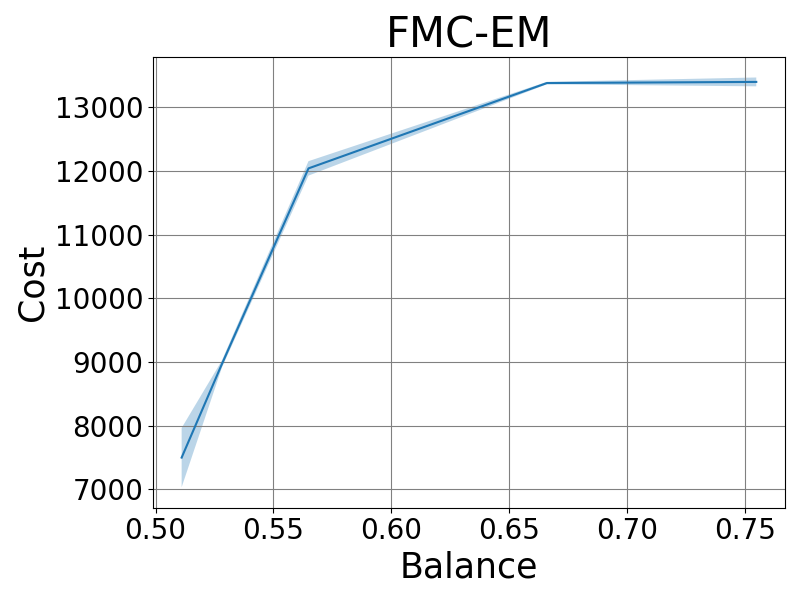}
    \includegraphics[width=0.24\linewidth]{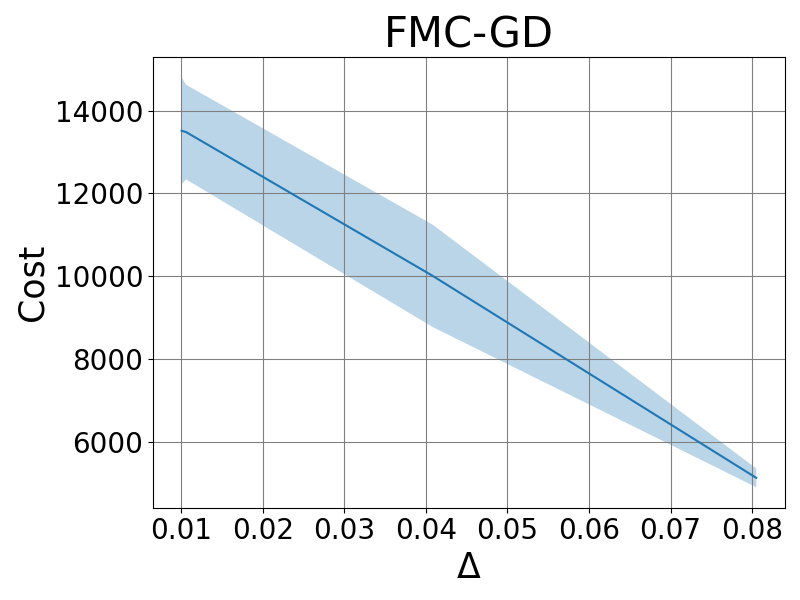}
    \includegraphics[width=0.24\linewidth]{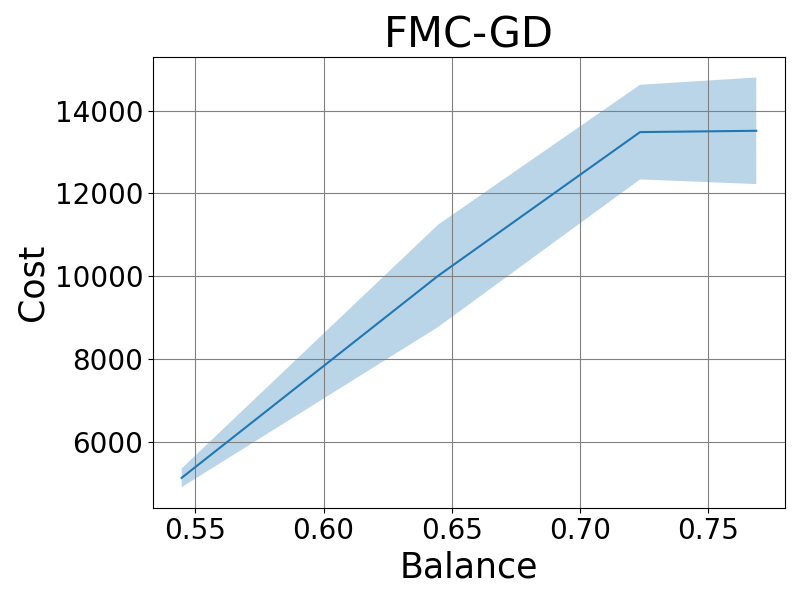}
    \caption{
    First row: Pareto front lines between Balance and Cost with standard deviation bands of FMC-EM and FMC-GD on Adult dataset.
    Second row: Pareto front lines between $\{\Delta$, Balance$\}$ and Cost with standard deviation bands of FMC-EM and FMC-GD on Bank dataset.
    Third row: Pareto front lines between $\{\Delta$, Balance$\}$ and Cost with standard deviation bands of FMC-EM and FMC-GD on Credit dataset.
    }
    \label{fig:10_seeds_remaining}
\end{figure*}

\clearpage
\subsection{Performance Comparison}

\paragraph{FMC vs. Baseline algorithms}

\cref{fig:three_L2_pareto_Balance} below shows the Pareto front plots between Balance and Cost on Adult, Bank, and Credit datasets.
Similar to \cref{fig:three_L2_pareto_delta}, we observe that FMC still outperforms SFC, while VFC is limited to achieving fairness levels under certain value (the end of the blue dashed line).

\begin{figure}[h]
    \centering
    \includegraphics[width=0.25\linewidth]{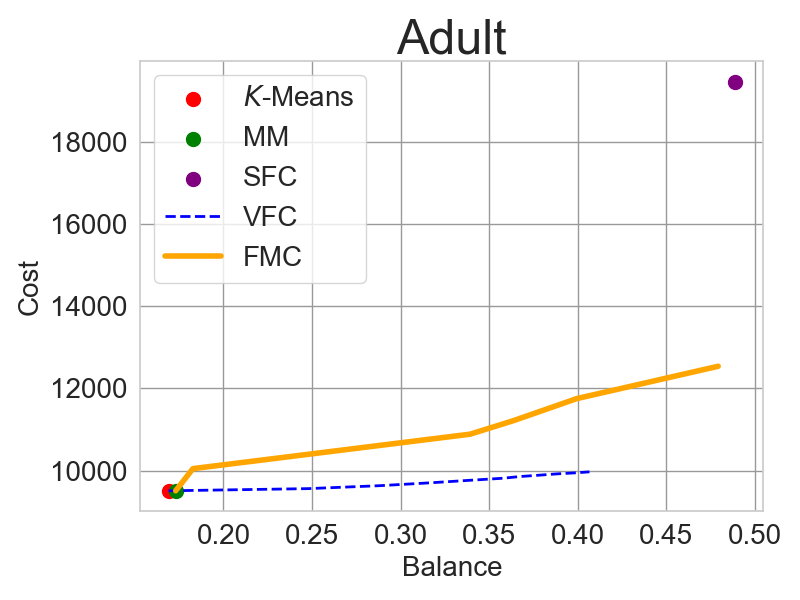}
    \includegraphics[width=0.25\linewidth]{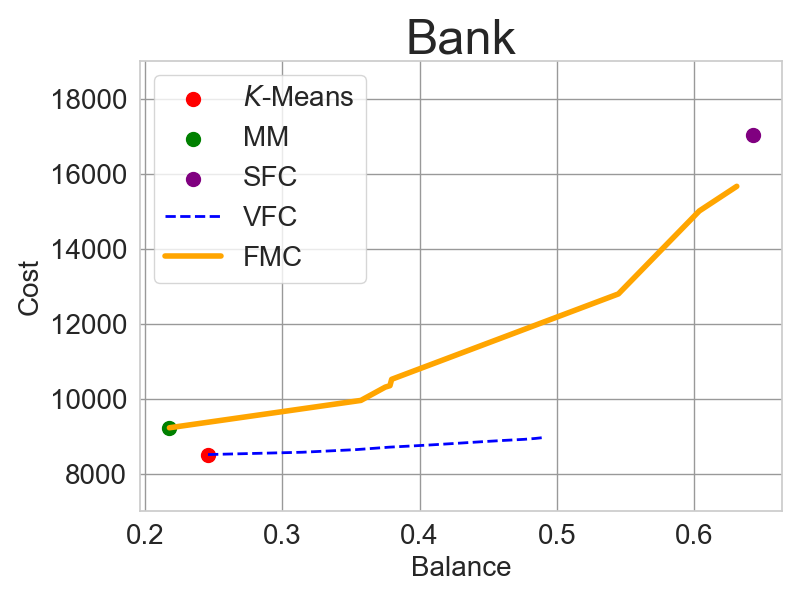}
    \includegraphics[width=0.25\linewidth]{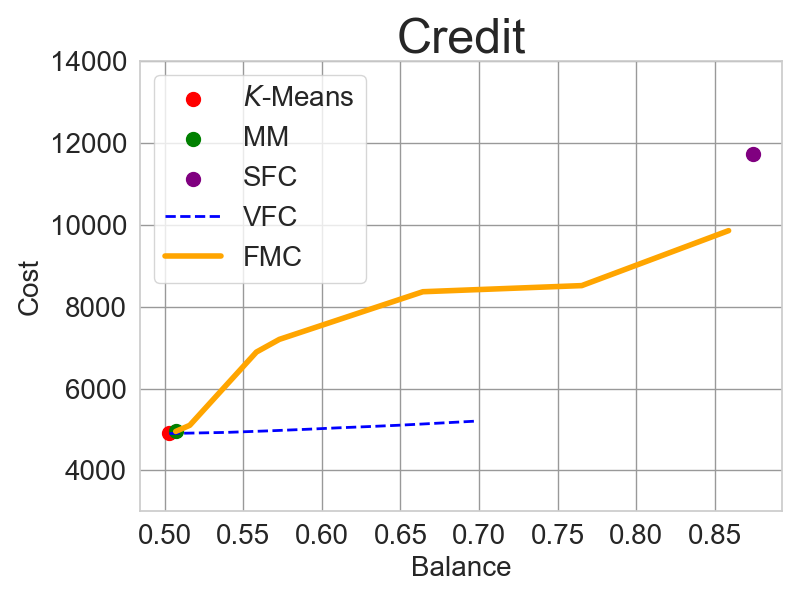}
    \caption{\small Pareto front lines between Balance and Cost on Adult, Bank, and Credit datasets.
    }
    \label{fig:three_L2_pareto_Balance}
\end{figure}

\cref{fig:three_noL2_pareto} shows the Pareto front plots between fairness levels ($\Delta$ and Balance) and Cost without $L_2$ normalization.
Without $L_2$ normalization, VFC may fail to converge (no line for VFC in Credit dataset), which is previously discussed in \citet{kim2025fairclusteringalignment}.

\begin{figure}[h]
    \centering
    \includegraphics[width=0.25\linewidth]{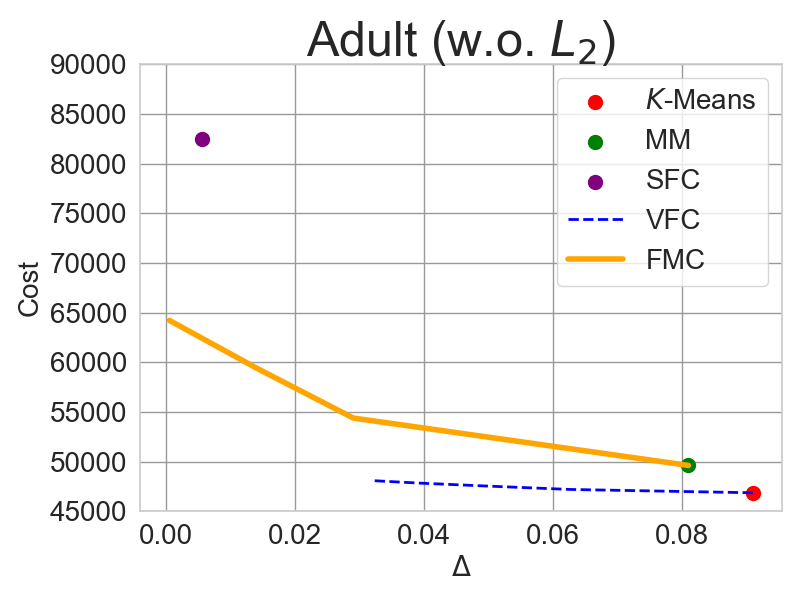}
    \includegraphics[width=0.25\linewidth]{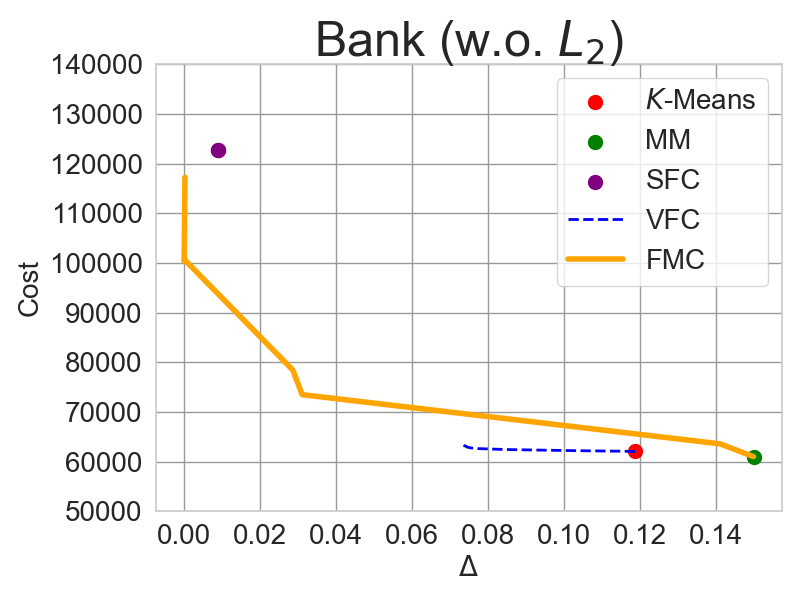}
    \includegraphics[width=0.25\linewidth]{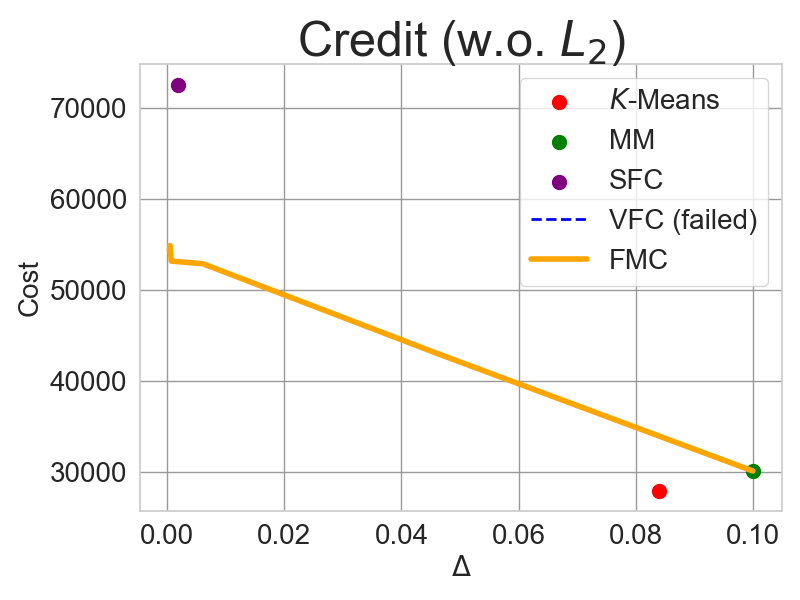}
    \includegraphics[width=0.25\linewidth]{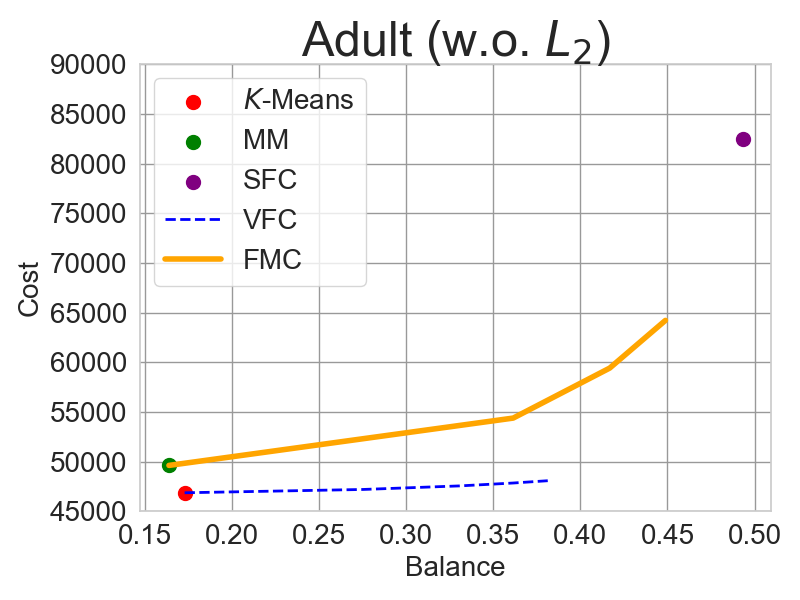}
    \includegraphics[width=0.25\linewidth]{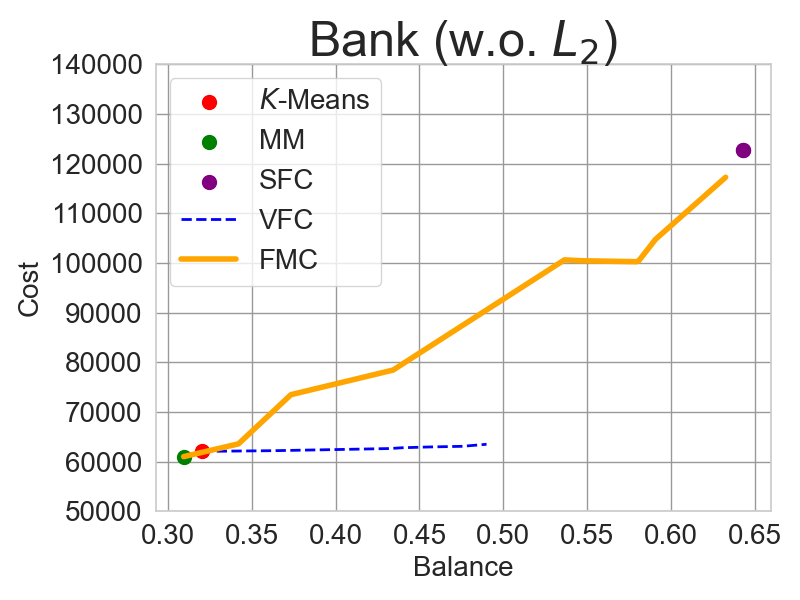}
    \includegraphics[width=0.25\linewidth]{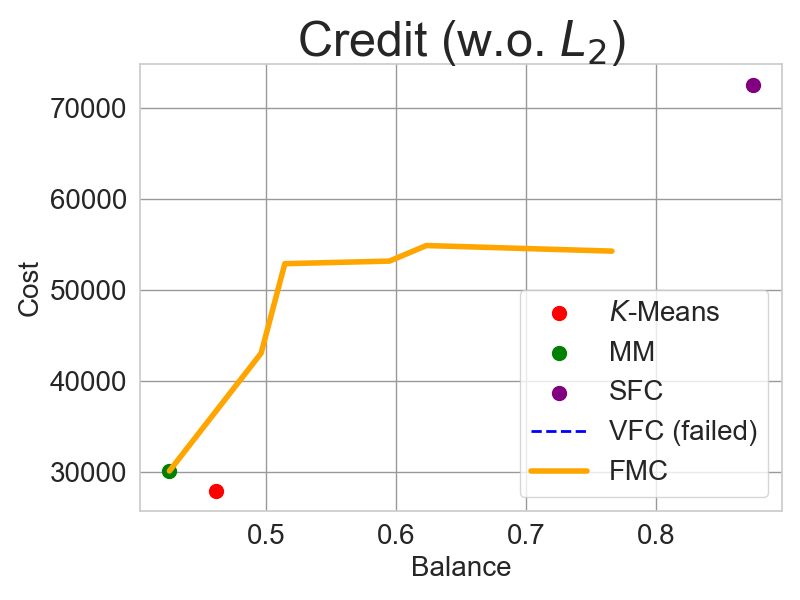}
    \caption{\small 
    (Top three) Pareto front lines between $\Delta$ and Cost on Adult, Bank, and Credit datasets, without $L_2$ normalization.\\
    (Bottom three) Pareto front lines between Balance and Cost on Adult, Bank, and Credit datasets, without $L_2$ normalization.
    }
    \label{fig:three_noL2_pareto}
\end{figure}

Additionally, \cref{table:compare_cate_add_FCA} compares FMC and FCA \citep{kim2025fairclusteringalignment} on Adult and Bank datasets, in terms of Balance (higher is better) and Cost (lower is better).
The results suggest that FMC slightly underperforms FCA, but by a moderate margin, even though FCA is a deterministic method that directly optimizes the clustering cost under a fairness constraint rather than relying on model-based clustering ad FMC does.

\begin{table}[h]
  \centering
  \footnotesize
  \begin{tabular}{cccccc}
    \toprule
    \multicolumn{2}{c}{Dataset} & \multicolumn{2}{c}{Adult} & \multicolumn{2}{c}{Bank}\\
    \midrule
    \multicolumn{2}{c}{Algorithm} & Balance \(\uparrow\) & Cost \(\downarrow\) & Balance \(\uparrow\) & Cost \(\downarrow\) \\
    \cmidrule(r){0-1}
    \cmidrule(lr){3-4}
    \cmidrule(l){5-6}
    \multirow{3}{*}{Standard clustering} 
    & $K$-means    &   0.169   &   9509   &   0.246   &   8513 \\
    & MM-GD     &   0.174   &   9629   &   0.243   &   8603 \\
    & MM-EM     &   0.173   &   9636   &   0.242   &   8600 \\
    \arrayrulecolor{gray!80}\midrule\arrayrulecolor{black}
    \multirow{3}{*}{Fair clustering} 
    & FCA       &   0.494        & 10680  &   0.645        &  10853   \\
    & FMC-GD    &   0.467   &   14921   &   0.635   &   15725   \\
    & FMC-EM    &   0.481   &   12715   &   0.629   &   16635   \\
    \bottomrule
  \end{tabular}
  \caption{Comparison of FMC and FCA in terms of Balance and Cost on Adult and Bank datasets.}
  \label{table:compare_cate_add_FCA}
\end{table}

\clearpage
\subsection{Analysis on Large-Scale Dataset}

\paragraph{FMC vs. Baseline algorithms}

\cref{fig:census_noL2_pareto} shows the Pareto front trade-off between fairness levels and Cost, without $L_2$ normalization.
Similar to \cref{fig:census_L2_pareto_delta}, FMC outperforms SFC in that FMC achieves lower Cost than SFC under a perfect fairness level (i.e., $\Delta \approx 0.0$).
Note that, without $L_2$ normalization, there is no line for VFC since VFC fails to converge, which is previously discussed in \citet{kim2025fairclusteringalignment}.

\begin{figure}[h]
    \centering
    \includegraphics[width=0.3\linewidth]{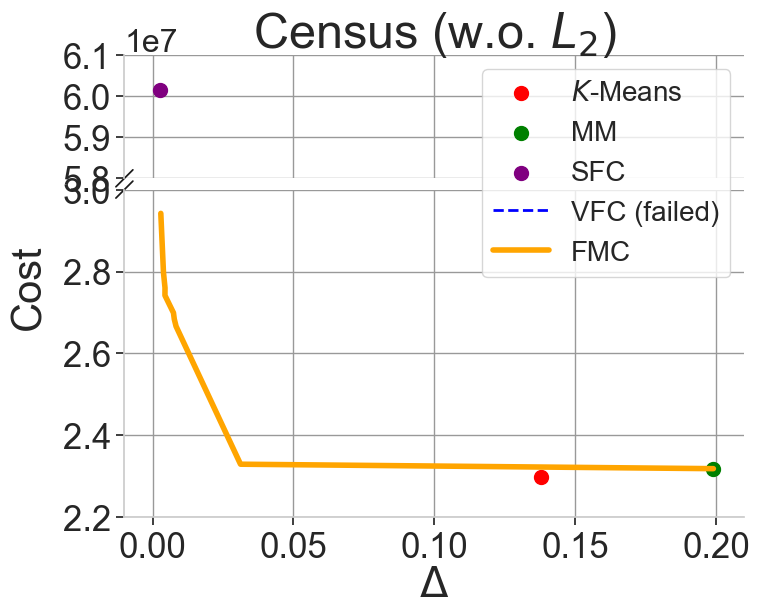}
    \includegraphics[width=0.3\linewidth]{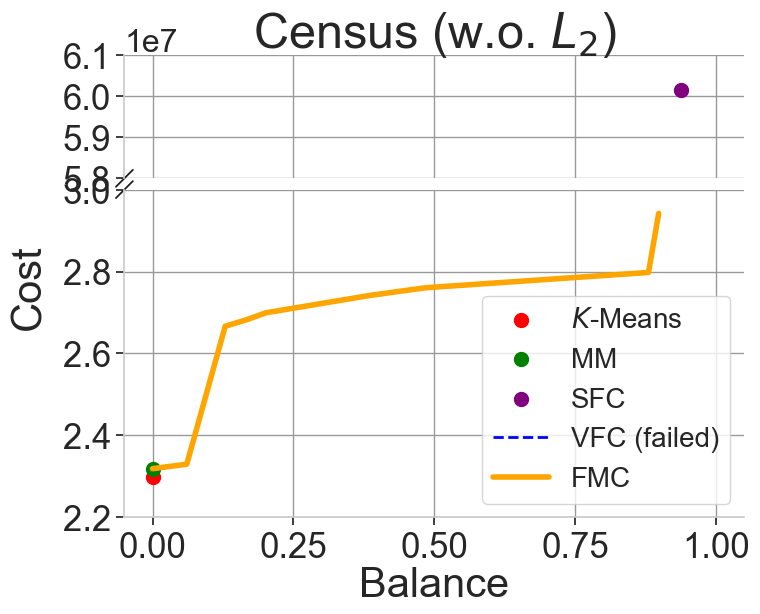}
    \caption{
    (Left) Pareto front lines between $\Delta$ and Cost on Census dataset, without $L_2$ normalization.
    (Right) Pareto front lines between Balance and Cost on Census dataset, without $L_2$ normalization.
    }
    \label{fig:census_noL2_pareto}
\end{figure}

\paragraph{Mini-batch learning vs. Sub-sample learning}

Without $L_2$ normalization, we also examine whether sub-sample learning can empirically perform similar to mini-batch learning by varying the sub-sample size among $\{1\%, 3\%, 5\%, 10\%\}$.
Similar to \cref{table:large_data_subsampling_L2} (i.e., results with $L_2$ normalization), the results in \cref{table:large_data_subsampling} demonstrate that sub-sample learning with sizes greater than 5\% achieves similar empirical performance with that of mini-batch learning, while requiring fewer computations.
These results also suggest that sub-sample learning can be particularly advantageous when the dataset is very large.

\begin{table}[h]
  \centering
  \footnotesize
  \begin{tabular}{l|cccc}
    \toprule
    Method $\left( \frac{n}{N} \right)$ & Cost ($\times 10^{7}$) & $\Delta$ & Balance & Time
    \\
    \midrule
    SS (1\%) & 3.073 & 0.011 & 0.832 & 52.4\%
    \\
    SS (3\%) & 2.907 & 0.006 & 0.895 & 54.5\%
    \\
    SS (5\%) & 3.096 & 0.003 & 0.898 & 58.5\%
    \\
    SS (10\%) & 3.049 & 0.003 & 0.902 & 79.2\%
    \\
    \midrule
    MB (10\%) & 3.043 & 0.003 & 0.896 & 100.0\%
    \\
    \bottomrule
  \end{tabular}
  \caption{Performances of mini-batch learning (MB) and sub-sample learning (SS) on Census dataset, without $L_2$ normalization.
  The computation times of SS are measured by the relative ratio compared to the mini-batch learning with $10\%$, and the Lagrangian $\lambda$ for each case is tuned to achieve the lowest Gap value i.e., $\Delta \approx 0.$}
  \label{table:large_data_subsampling}
\end{table}

\subsection{Categorical Data Analysis}

\cref{table:compare_cate_m=2,table:compare_cate_m=3} confirm the adaptability of FMC to categorical data with two and three sensitive groups, respectively.
The results indicate that FMC-GD and FMC-EM can produce fair clusterings for categorical data well.
Here, MM-GD refers to the mixture model learned by gradient‐descent optimization, while MM-EM refers to the mixture model learned by the EM algorithm.

\begin{table*}[h]
  \centering
  \footnotesize
  \begin{tabular}{ccccccccccc}
    \toprule
    \multicolumn{2}{c}{Dataset} & \multicolumn{3}{c}{Adult} & \multicolumn{3}{c}{Bank} & \multicolumn{3}{c}{Credit}\\
    \midrule
    \multicolumn{2}{c}{Algorithm} & Balance \(\uparrow\) & NLL \(\downarrow\) & $\Delta$ \(\downarrow\) & Balance \(\uparrow\) & NLL \(\downarrow\) & $\Delta$ \(\downarrow\) & Balance \(\uparrow\) & NLL \(\downarrow\) & $\Delta$ \(\downarrow\)\\
    \cmidrule(r){0-1}
    \cmidrule(lr){3-5}
    \cmidrule(l){6-8}
    \cmidrule(l){9-11}
    \multirow{2}{*}{Standard clustering} & MM-GD   &   0.004   &   8.609   &   0.089   &   0.322   &   9.706   &   0.037   &   0.534   &   5.749   &   0.030   \\
    & MM-EM   &   0.004   &   8.618   &   0.099   &   0.312   &   9.771   &   0.030   &   0.534   &   5.751   &   0.029   \\
    \arrayrulecolor{gray!80}\midrule\arrayrulecolor{black}
    \multirow{2}{*}{Fair clustering} & FMC-GD  &   0.467   &   9.524   &   0.008   &   0.631   &   9.919   &   0.002   &   0.732   &   8.232   &   0.005   \\
    & FMC-EM  &   0.480   &   11.600  &   0.001   &   0.633   &   9.940   &   0.002   &   0.739   &   7.706   &   0.001   \\
    \bottomrule
  \end{tabular}
  \normalsize
  \caption{Clustering results (i.e., Balance, NLL, and $\Delta$) on three datasets with categorical data.}
  \label{table:compare_cate_m=2}
\end{table*}

\begin{table*}[h]
  \centering
  \footnotesize
  \begin{tabular}{cccccccc}
    \toprule
    \multicolumn{2}{c}{Dataset} & \multicolumn{3}{c}{Bank} & \multicolumn{3}{c}{Credit}\\
    \midrule
    \multicolumn{2}{c}{Algorithm} & Balance \(\uparrow\) & NLL \(\downarrow\) & $\Delta$ \(\downarrow\) & Balance \(\uparrow\) & NLL \(\downarrow\) & $\Delta$ \(\downarrow\)\\
    \cmidrule(r){0-1}
    \cmidrule(lr){3-5}
    \cmidrule(l){6-8}
    \multirow{2}{*}{Standard clustering}
    & MM-GD    & 0.119 & 9.700 & 0.027 & 0.286 & 5.762 & 0.029 \\
    & MM-EM    & 0.114 & 9.717 & 0.027 & 0.298 & 5.791 & 0.028 \\
    \arrayrulecolor{gray!80}\midrule\arrayrulecolor{black}
    \multirow{2}{*}{Fair clustering}
    & FMC-GD & 0.180 & 10.026 & 0.002 & 0.371 & 8.474 & 0.003 \\
    & FMC-EM & 0.179 & 10.281 & 0.002 & 0.373 & 7.617 & 0.006 \\
    \bottomrule
  \end{tabular}
  \normalsize
  \caption{Clustering results (i.e., Balance, NLL, and $\Delta$) on two datasets with categorical data for $M = 3$.}
  \label{table:compare_cate_m=3}
\end{table*}

In \cref{table:cont+cate}, we further present a more concrete and explicit advantage of FMC in jointly handling categorical data alongside continuous data within a single mixture model.

\subsection{Extension to Multinary Sensitive Attributes}

\cref{table:combined_m3} compares two fairness-agnostic methods ($K$-means and MM-EM) with two fair clustering methods (VFC and FMC-EM) on Bank and Credit datasets with three sensitive groups (see \textit{Implementation details} section for the details).
It shows that, not only for binary sensitive groups, but also for three sensitive groups, FMC can achieve near-perfect fairness level and control the fairness level, while VFC shows a lower Cost since VFC minimizes Cost directly while FMC maximizes the log-likelihood, which is different from Cost.

\begin{table*}[h]
  \centering
  \footnotesize
  \begin{tabular}{@{}cccccccc@{}}
    \toprule
    \multicolumn{2}{c}{Dataset} 
    & \multicolumn{3}{c}{Bank} 
    & \multicolumn{3}{c}{Credit} \\
    \midrule
    \multicolumn{2}{c}{Algorithm} & Balance \(\uparrow\) & Cost \(\downarrow\) & \(\Delta \downarrow\)
    & Balance \(\uparrow\) & NLL \(\downarrow\) & \(\Delta \downarrow\) \\
    \cmidrule(r){0-1}
    \cmidrule(lr){3-5}
    \cmidrule(l){6-8}
    \multirow{2}{*}{Standard clustering}
    & $K$-means & 0.087 & 8513  & 0.055 & 0.159 & 4901  & 0.056 \\
    & MM-EM     & 0.085 & 8538  & 0.108 & 0.151 & 4954  & 0.110 \\
    \arrayrulecolor{gray!80}\midrule\arrayrulecolor{black}
    \multirow{3}{*}{Fair clustering} 
    & VFC                   & 0.171 & 9373  & 0.016 & 0.294 & 6058  & 0.019 \\
    & FMC-EM ($\lambda = 4$) & 0.168 & 19059 & 0.013 & 0.295 & 22047 & 0.017 \\
    & FMC-EM ($\lambda = 10$) & 0.180 & 20040 & 0.000 & 0.375 & 23067 & 0.000 \\
    \bottomrule
  \end{tabular}
  \normalsize
  \caption{
  Clustering results (i.e., Balance, Cost, and $\Delta$) on Bank and Credit datasets with continuous data for $M = 3$.
  We set $\lambda = 4$ to compare with VFC (e.g., $\Delta \approx 0.016$ for Bank) and $\lambda = 10.0$ to achieve the perfect fairness (i.e., $\Delta \approx 0$).
  }
  \label{table:combined_m3}
\end{table*}

\clearpage

\subsection{Parameter Impact Study: The Number of Clusters $K$}

\cref{fig:ablation_num_k} illustrates the effect of $K$ on NLL (negative log‐likelihood), $\Delta$, and Balance. 
The solid orange lines denote performances of FMC-EM and the dotted gray lines denote performances of MM-EM. 
As expected, in FMC‐EM, increasing $K$ reduces NLL while $\Delta$ and Balance remain at the perfect fairness level ($\Delta \approx 0$ and Balance $\approx 0.48$).
This result implies that FMC-EM controls the fairness level well, regardless of $K.$
Note that a slight decrease in Balance for FMC-EM is also expected: because we should split an integer‐sized dataset into $K$ partitions, the Balance can drop slightly as $K$ increases.
In contrast, for MM‐EM, NLL, $\Delta$ and Balance are all affected by $K.$

\begin{figure*}[h]
    \centering
    \includegraphics[width=0.95\linewidth]{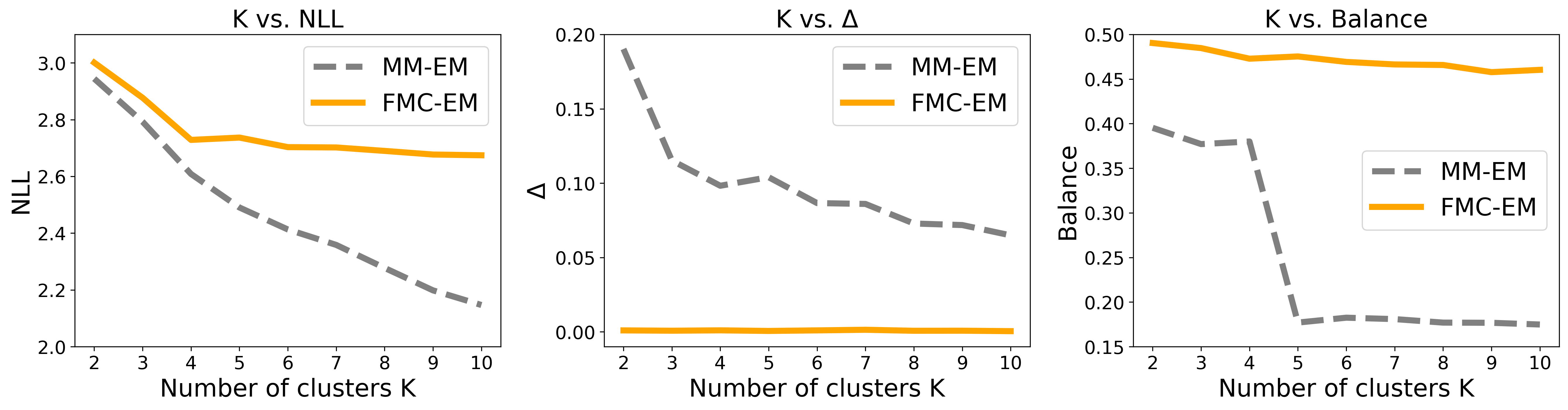}
    \caption{
    The effects of $K$ on NLL, $\Delta,$ and Balance (from left to right) for FMC-EM (solid orange line) and MM-EM (dotted gray line) on Adult dataset.
    The number of clusters $K$ ranges from 2 to 10.
    }
    \label{fig:ablation_num_k}
\end{figure*}

\subsection{Parameter Impact Study: Choice of the Structure of Covariance Matrix}

\cref{fig:All_cov_compare} below shows that the diagonal structure yields substantially lower negative log-likelihood than the isotropic structure, under similar levels of Balance.
This result suggests that, the diagonal structure is generally helpful in view of log-likelihood, which is an additional advantage of our model-based clustering approach.

\begin{figure}[h]
    \centering
    \includegraphics[width=0.33\linewidth]{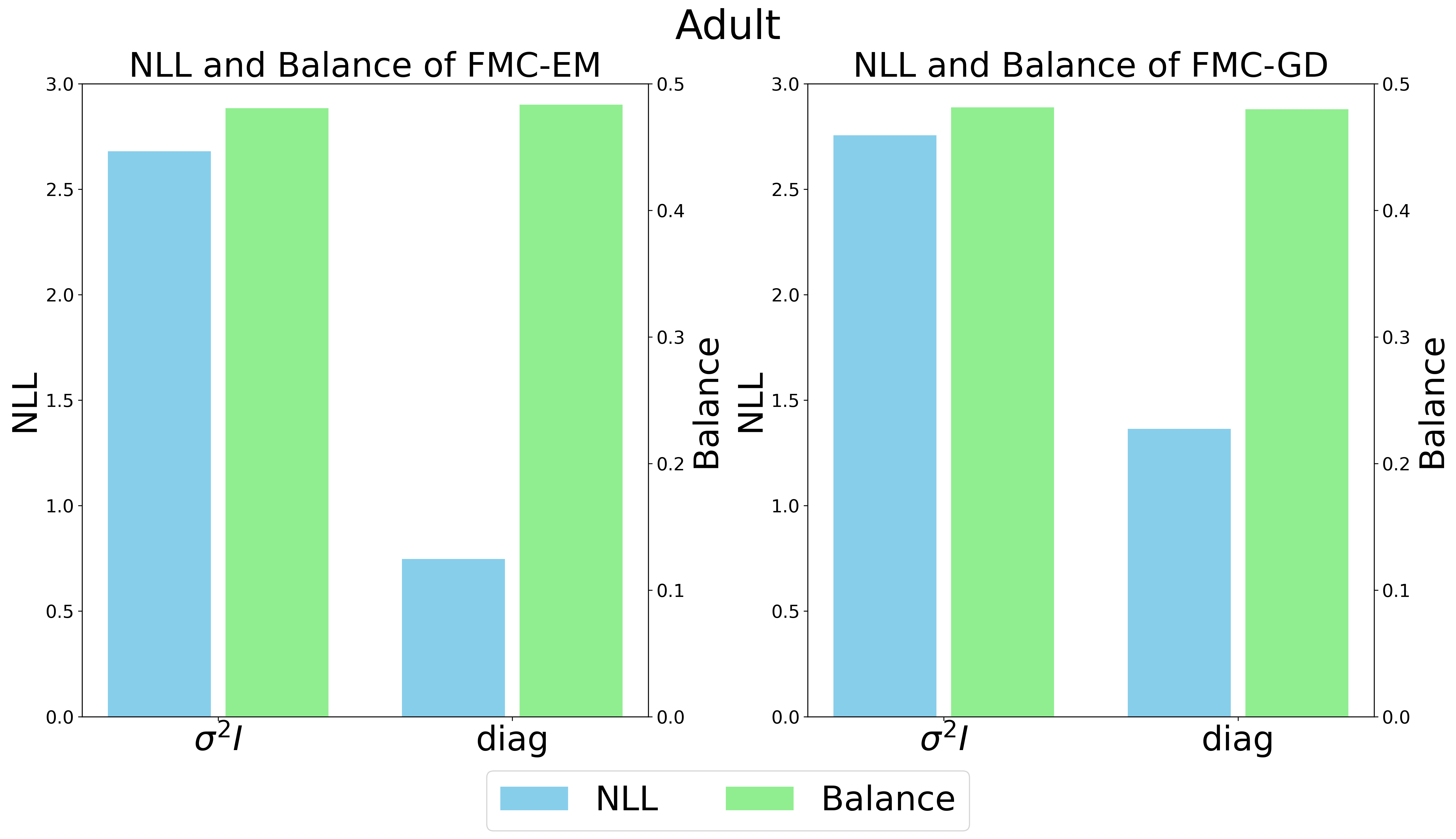}
    \includegraphics[width=0.33\linewidth]{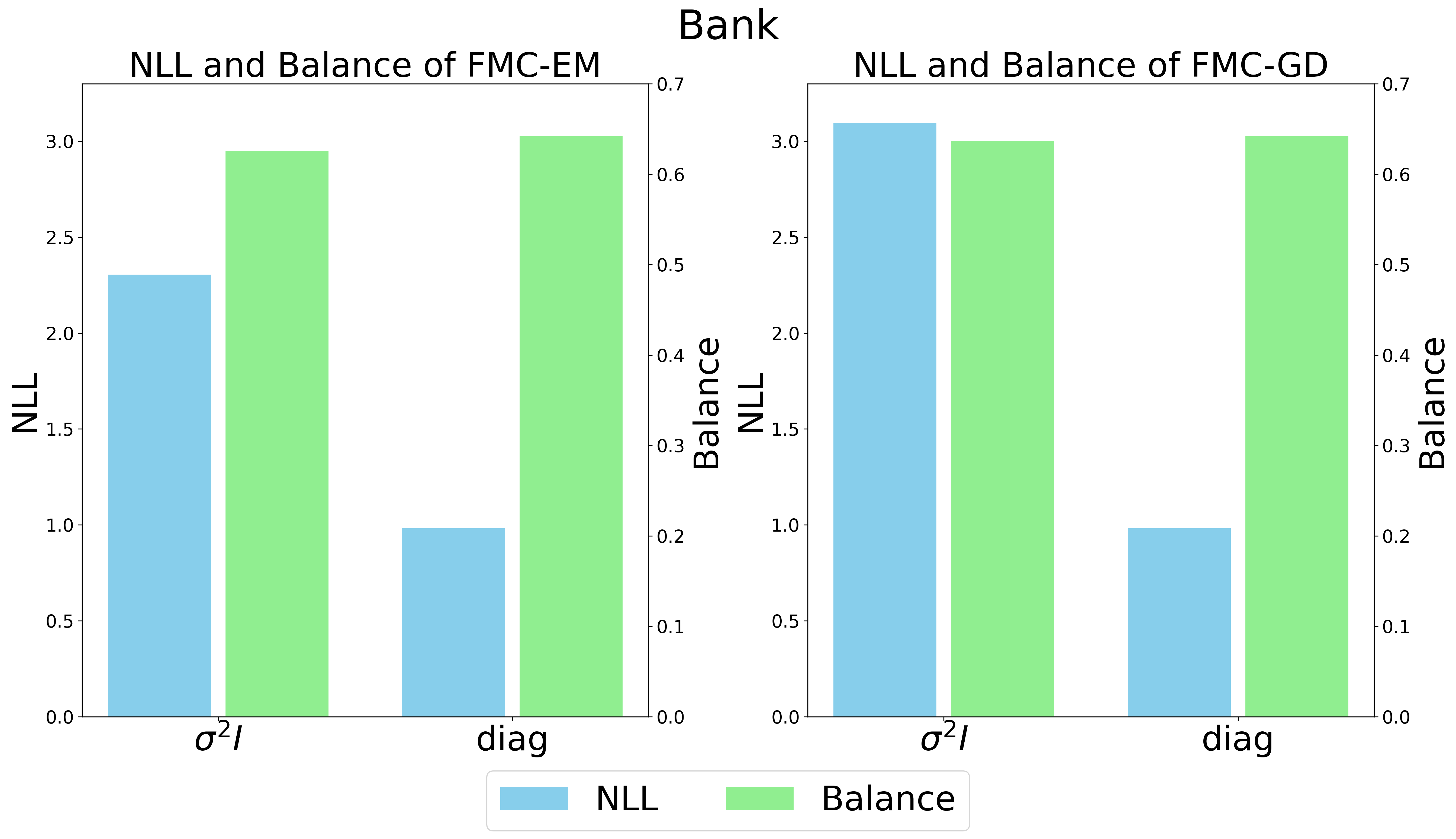}
    \includegraphics[width=0.33\linewidth]{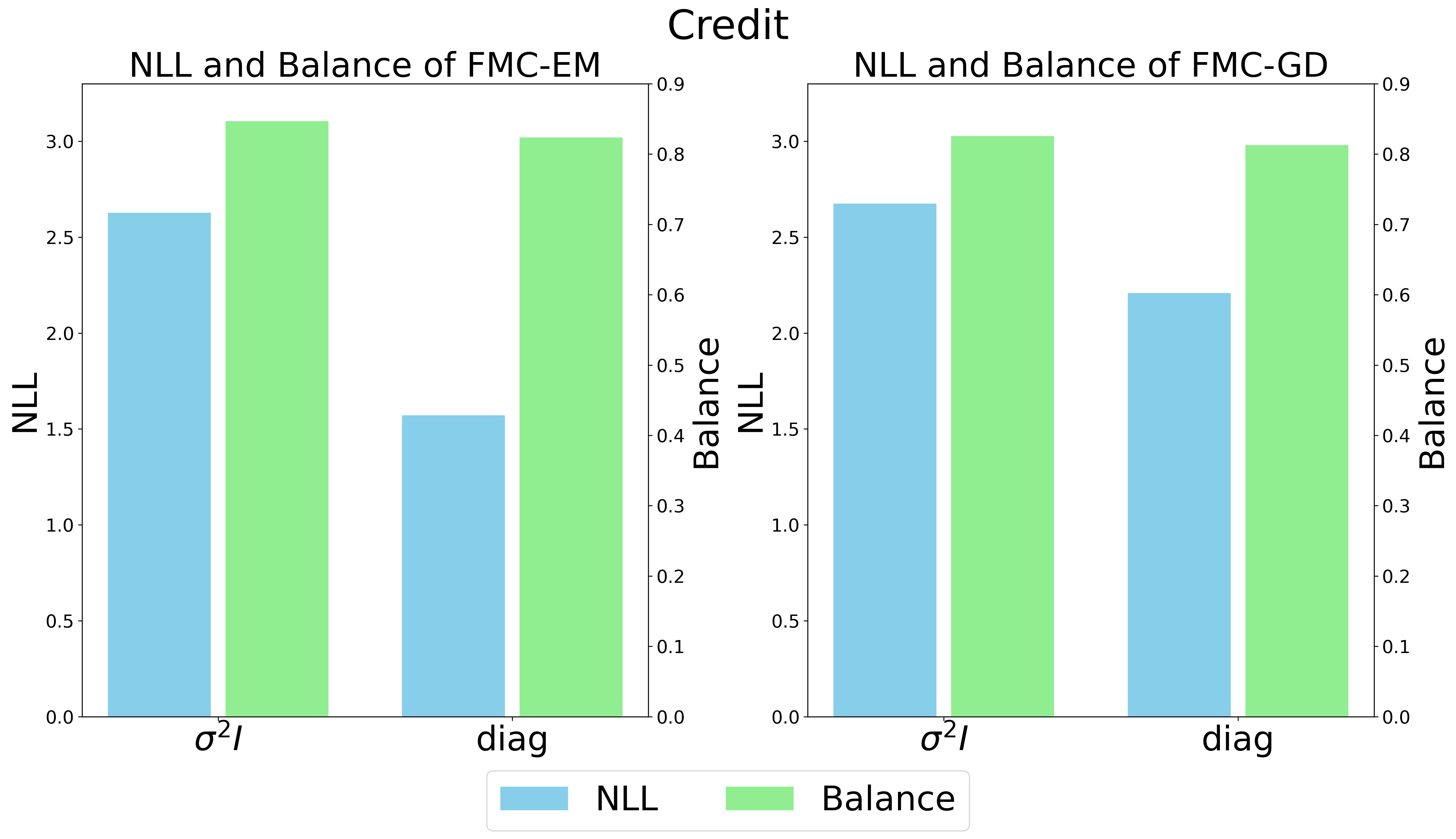}
    \caption{
    Comparison of different covariance structures (isotropic vs. diagonal) in FMC-EM and FMC-GD on Adult, Bank, and Credit datasets, in terms of the negative log-likelihood (NLL) and Balance. 
    $\sigma^2 I$ denotes the results from isotropic covariance and diag denotes the results from diagonal covariance. 
    }
    \label{fig:All_cov_compare}
\end{figure}

\clearpage
\subsection{Empirical convergence of FMC-EM}
\subsection{Empirical Convergence of FMC-EM}

To confirm the empirical convergence of FMC-EM, we track NLL (negative log-likelihood), Cost, $\Delta,$ and Balance, over 200 iterations on Adult dataset.
We set $\lambda = 5.0$ to achieve perfect fairness (i.e., $\Delta \approx 0$).
The results in \cref{fig:variations} shows that FMC-EM converges well in terms of the four measures in practice.
Furthermore, we observe that FMC-EM first improves fairness (i.e., decreases $\Delta$ and increases Balance) in the early stages, and then subsequently reduces NLL and Cost.

\begin{figure*}[h]
    \centering
    \includegraphics[width=0.24\linewidth]{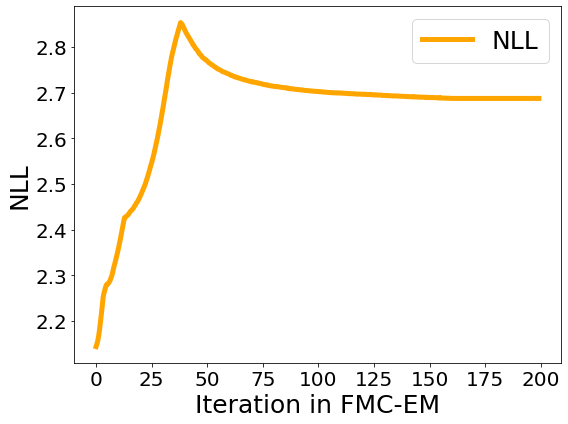}
    \includegraphics[width=0.24\linewidth]{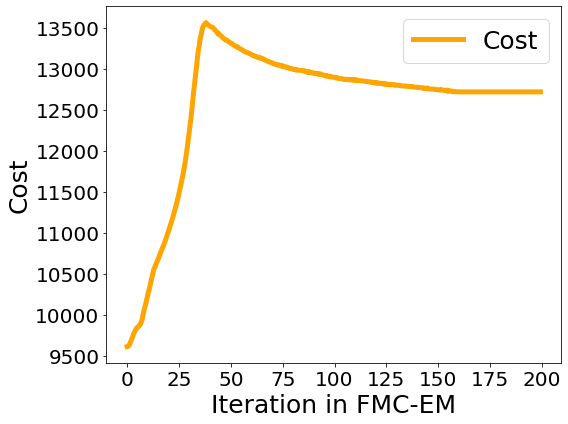}
    \includegraphics[width=0.24\linewidth]{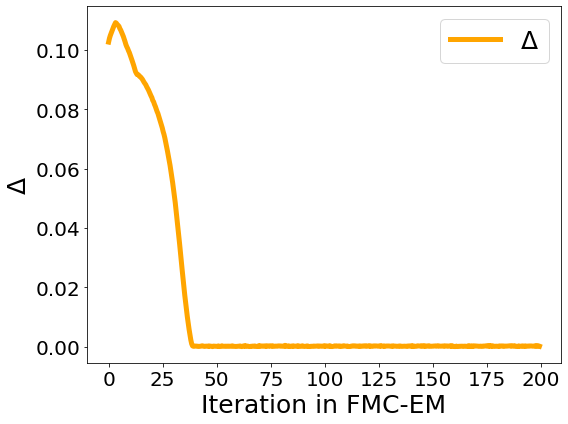}
    \includegraphics[width=0.24\linewidth]{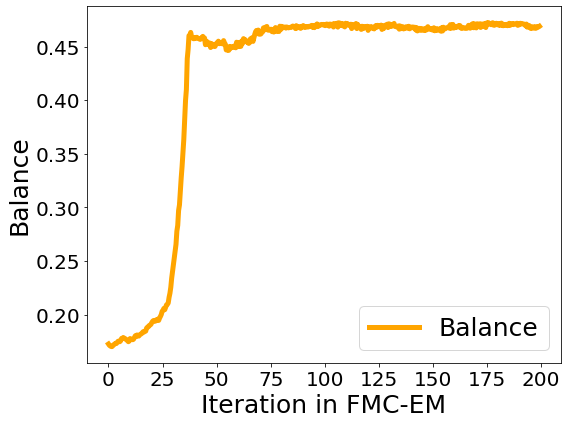}
    \caption{Convergence of FMC-EM (with $\lambda = 5.0$) in terms of NLL, Cost, $\Delta,$ and Balance on Adult dataset.}
    \label{fig:variations}
\end{figure*}

\subsection{Empirical Relationship Between $\Delta$ and Balance}

Here, we empirically verify the relationship between $\Delta$ and Balance (i.e., we check whether a smaller $\Delta$ corresponds to a larger Balance).
\cref{fig:delta_balance} below plots $\Delta$ against Balance of FMC-EM for various $\lambda$ values on Adult dataset.
It implies that, smaller $\Delta$ tends to correspond to higher Balance in practice, demonstrating that reducing $\Delta$ can contribute to increase Balance.
Therefore, using $\Delta$ as a proxy for Balance during the training phase incurs small discrepancy, in other words, we can effectively control Balance by controlling $\Delta.$

\begin{figure*}[h]
    \centering
    \includegraphics[width=0.4\linewidth]{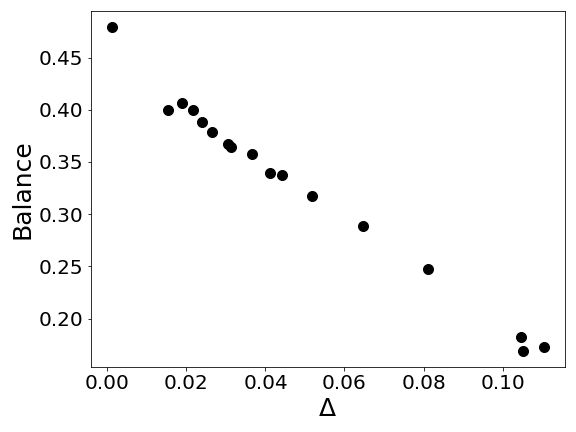}
    \caption{$\Delta$ vs. Balance on Adult dataset.
    The points are obtained from FMC-EM with various values of $\lambda.$}
    \label{fig:delta_balance}
\end{figure*}

\clearpage
\subsection{Advantage of FMC: Joint Modeling for Continuous and Categorical Data}

FMC can be applied to any variable type provided that the likelihood is well-defined.
For example, FMC can handle both continuous and categorical data using a single mixture model.  
By modeling heterogeneous variable types simultaneously, we here show that FMC achieves superior clustering performance (i.e., outperforms both baseline methods and FMC applied to continuous features only) in terms of classification accuracy.

\paragraph{Method}
Assume that given data has two types of variables: continuous and categorical.
We divide the variables of each data point into continuous and categorical, i.e., $x_{i} = (x_{i}^\textup{cont}, x_{i}^\textup{cate}), \forall i \in [N].$ 
Let $x_{i}^\textup{cont}=(x_{i1}^\textup{cont},\ldots,x_{id_\textup{cont}}^\textup{cont})\in\mathbb{R}^{d_\textup{cont}}$ and $x_{i}^\textup{cate}=(x_{i1}^\textup{cate},\ldots,x_{id_\textup{cate}}^\textup{cate})\in\mathbb{Z}^{d_\textup{cate}}$. 
For the continuous part $x_{i}^\textup{cont},$ we consider the Gaussian mixture model with mean vectors $\boldsymbol{\mu} = (\mu_{1}, \ldots, \mu_{K})$ with an isotropic covariance structure $\sigma^{2} \mathbb{I}_{d_\textup{cont}}$, as done in our main analysis.
For the categorical part $x_{i}^\textup{cate},$ we consider the multinoulli mixture model, described in \textit{Multinouilli mixture model for categorical data} section.
Recall that parameters for multinoulli mixture are $\boldsymbol{\phi} = \{(\phi_{k,1},\ldots,\phi_{k,d_{\textup{cate}}})\}_{k=1}^{K},$ where $\phi_{k,j}=(p_{k,j,1},\ldots,p_{k,j,l_j}) \in [0,1]^{l_j}$ is the vector of the probability masses of categorical distributions for $k\in[K]$.

Suppose that the continuous and categorical data are independent.
Then, the mixture density for the `mixed' (= continuous + categorical) variables can be represented as:
\begin{equation}
    \mathbf{f}(x_i \mid \Theta)=\sum_{k=1}^{K}\pi_k \mathcal{N}(x_{i}^\textup{cont};\mu_k,\sigma^2\mathbb{I}_{d_\textup{cont}})\prod_{j=1}^{d_\textup{cate}}\textup{Cat}(x_{ij}^\textup{cate};\phi_{k,j}),
\end{equation}
where $\Theta = (\boldsymbol{\pi},(\boldsymbol{\mu}, \sigma, \boldsymbol{\phi})),$
leading to the complete-data log-likelihood whose form is:
\begin{equation}\label{eq:lcomp_mixed}
    \ell_{comp}^\textup{mixed}(\Theta \mid Y) = \sum_{i=1}^N \sum_{k=1}^K \mathbb{I}(Z_{i} = k) 
    \left(
    \log \pi_{k} - \frac{d_\textup{cont} \log (2 \pi \sigma^{2})}{2} - \frac{\Vert x_{i}^\textup{cont} - \mu_{k} \Vert_{2}^{2}}{2 \sigma^{2}} +
     \sum_{j=1}^{d_{\textup{cate}}}\sum_{c=1}^{l_j} \mathbb{I} (x_{ij}^\textup{cate}=c) \cdot \log p_{k,j,c}
    \right).
\end{equation}

\paragraph{Experimental setup}
For baseline methods, we consider SFC and VFC using continuous data $x_{i}^{cont}, i \in [N].$
We consider Adult dataset, which includes a binary label indicating whether an individual's annual income exceeds \$10K (the labels are not used during clustering). We set $K=2$ to match this binary label.
After clustering, we predict the label of each sample by its assigned cluster index.
Since cluster indices are assigned arbitrarily, we compute classification accuracy by evaluating both possible mappings between cluster indices and the ground-truth labels and selecting the mapping that yields the highest accuracy.
Note that a higher accuracy implies a better clustering quality.

\paragraph{Results}
\cref{table:cont+cate} below compares 
(i) continuous-only methods (SFC, VFC, and FMC applied to $x^\textup{cont}$ only)
and
(ii) FMC applied jointly to $(x^\textup{cont}, x^\textup{cate})$.
The results show that FMC with the mixed (continuous + categorical) features $(x^\textup{cont}, x^\textup{cate})$ outperforms SFC, VFC, and FMC with $x^\textup{cont}$ only.
In other words, FMC with the mixed features achieves a higher probability of label homogeneity within clusters, indicating that jointly using continuous and categorical data yields superior clustering performance, than using categorical data only.
This result demonstrates a clear advantage of our model-based approach over baseline methods, which can only handle continuous data.

\begin{table}[h]
  \centering
  \footnotesize
  \begin{tabular}{l|ccc}
    \toprule
    Method & Accuracy $\uparrow$ & $\Delta$ $\downarrow$ & Balance $\uparrow$
    \\
    \midrule
    SFC & 0.573 & 0.006 & 0.490
    \\
    FMC (continuous only, $\lambda = 10$) & 0.579 & 0.000 & 0.491
    \\
    FMC (continuous + categorical, $\lambda = 10$) & \textbf{0.706} & 0.000 & 0.488
    \\
    \arrayrulecolor{gray!80}\midrule\arrayrulecolor{black}
    VFC & 0.621 & 0.070 & 0.429
    \\
    FMC (continuous only, $\lambda = 1$) & 0.627 & 0.071 & 0.411
    \\
    FMC (continuous + categorical, $\lambda = 1$) & \textbf{0.703} & 0.054 & 0.430
    \\
    \bottomrule
  \end{tabular}
  \caption{
  Comparison of classification accuracy between continuous-only methods and FMC applied jointly to $(x^\textup{cont}, x^\textup{cate})$ for $K=2,$ on Adult dataset.
  We set $\lambda = 10$ when comparing with SFC ($\Delta \approx 0$) and $\lambda = 1$ when comparing with VFC ($\Delta \approx 0.07$).
  }
  \label{table:cont+cate}
\end{table}

\end{document}